\newtheorem{theorem}{Theorem}
\newtheorem{definition}{Definition}
\newcommand{\breg}{\mathbf{B}}
\newcommand{\divg}{\mathbf{D}}
\newcommand{\R}{\mathbb{R}}
\newcommand{\fopt}{f^\circledast}
\newcommand{\LHS}{\mathrm{LHS}}
\newcommand{\RHS}{\mathrm{RHS}}
\newcommand{\1}{\mathbf{1}}
\newcommand{\I}{\mathbf{I}}
\newcommand{\Ical}{\mathcal{I}}
\newcommand{\mA}{\mathbf{A}}
\newcommand{\reg}{\mathrm{reg}}
\newcommand{\defeq}{\mathrel{\mathop:}=}
\newcommand{\heta}{\hat{\veta}}
\newcommand{\hr}{\hat{r}}
\newcommand{\uL}{\underline{L}}
\def\vpi{{\bm{\pi}}}
\def\eqref#1{Eq.~(\ref{#1})}
\def\1{\bm{1}}
\def\rmL{{\mathbf{L}}}
\def\vmu{{\bm{\mu}}}
\def\vtheta{{\bm{\theta}}}
\def\veta{{\bm{\eta}}}
\def\va{{\bm{a}}}
\def\vb{{\bm{b}}}
\def\vh{{\bm{h}}}
\def\vp{{\bm{p}}}
\def\vq{{\bm{q}}}
\def\vr{{\bm{r}}}
\def\vs{{\bm{s}}}
\def\vt{{\bm{t}}}
\def\vu{{\bm{u}}}
\def\vv{{\bm{v}}}
\def\vx{{\bm{x}}}
\def\vy{{\bm{y}}}
\def\mA{{\bm{A}}}
\def\mH{{\bm{H}}}
\DeclareMathAlphabet{\mathsfit}{\encodingdefault}{\sfdefault}{m}{sl}
\SetMathAlphabet{\mathsfit}{bold}{\encodingdefault}{\sfdefault}{bx}{n}
\def\gL{{\mathcal{L}}}
\def\gS{{\mathcal{S}}}
\def\gX{{\mathcal{X}}}
\def\gY{{\mathcal{Y}}}
\def\sP{{\mathbb{P}}}
\def\sR{{\mathbb{R}}}
\newcommand{\E}{\mathbb{E}}
\newcommand{\KL}{D_{\mathrm{KL}}}
\DeclareMathOperator*{\argmin}{arg\,min}
\title{A Unified Framework for Multi-distribution Density Ratio Estimation}
\author{Lantao Yu\\
Department of Computer Science\\
Stanford University\\
\texttt{lantaoyu@cs.stanford.edu}
\And
Yujia Jin\\
Department of Management Science and Engineering\\
Stanford University\\
\texttt{yujiajin@stanford.edu}
\And 
Stefano Ermon\\
Department of Computer Science\\
Stanford University\\
\texttt{ermon@cs.stanford.edu}
}
\begin{document}

\maketitle

\begin{abstract}
Binary density ratio estimation (DRE), the problem of estimating the ratio $p_1/p_2$ given their empirical samples, provides the foundation for many state-of-the-art machine learning algorithms such as contrastive representation learning and covariate shift adaptation. In this work, we consider a generalized setting where given samples from multiple distributions $p_1, \ldots, p_k$ (for $k > 2$), we aim to efficiently estimate the density ratios between all pairs of distributions. Such a generalization leads to important new applications such as estimating statistical discrepancy among multiple random variables like multi-distribution $f$-divergence, and bias correction via multiple importance sampling. We then develop a general framework from the perspective of Bregman divergence minimization, where each strictly convex multivariate function induces a proper loss for multi-distribution DRE. Moreover, we rederive the theoretical connection between multi-distribution density ratio estimation and class probability estimation, justifying the use of any strictly proper scoring rule composite with a link function for multi-distribution DRE. We show that our framework leads to methods that strictly generalize their counterparts in binary DRE, as well as new methods that show comparable or superior performance on various downstream tasks.
\end{abstract}

\section{Introduction}
Estimating the density ratio between two distributions based on their empirical samples is a central problem in machine learning, which continuously drives progress in this field and finds its applications in many machine learning tasks such as anomaly detection \citep{hido2008inlier,smola2009relative,hido2011statistical}, importance weighting in covariate shift adaptation \citep{huang2006correcting,sugiyama2007direct}, generative modeling \citep{uehara2016generative,nowozin2016f,grover2019bias}, two-sample test \citep{sugiyama2011least,gretton2012kernel}, mutual information estimation and representation learning \citep{oord2018representation,hjelm2018learning}.
It is such a powerful paradigm because computing density ratio focuses on extracting and preserving contrastive information between two distributions, which is crucial in many tasks.
Despite the tremendous success of binary DRE, many applications involve more than two probability distributions and developing density ratio estimation methods among multiple distributions has the potential of advancing various applications such as estimating multi-distribution statistical discrepancy measures \citep{garcia2012divergences},
multi-domain transfer learning, bias correction and variance reduction with multiple importance sampling \citep{elvira2019generalized}, multi-marginal generative modeling \citep{cao2019multi} and multilingual machine translation \citep{dong2015multi,aharoni2019massively}.

Although recent years have witnessed significant progress and a continuously increasing trend in developing more sophisticated and advanced methods for binary DRE \citep{sugiyama2012density,liu2017trimmed,rhodes2020telescoping,kato2021non,choi2021featurized}, methods for estimating density ratios among multiple distributions remain largely unexplored, besides an empirical exploration of multi-class logistic regression for multi-task learning \citep{bickel2008multi}, where the density ratios serve as the resampling weights between the distribution of a pool of examples of multiple tasks and the target distribution for a given task at hand and lead to significant accuracy improvement on HIV therapy screening experiments.

In this work, we propose a unified framework based on expected Bregman divergence minimization, where any strictly convex multivariate function induces a proper loss for multi-distribution DRE, thus generalizing the framework in \citep{sugiyama2012density} to multi-distribution case.
Moreover, by directly generalizing the Bregman identity in \citep{menon2016linking} to multi-variable case, we rederive a similar result to \citep{nock2016scaled}, which formally relates losses for multi-distribution density ratio estimation and class probability estimation and theoretically justifies the use of any strictly proper scoring rule (e.g., the logarithm score \citep{good1952rational}, the Brier score \citep{brier1950verification} and the pseudo-spherical score \citep{good1971comment}) composite with a link function for multi-distribution DRE. 
By choosing a variety of specific convex functions or proper scoring rules, we show that our unified framework leads to methods that strictly generalize their counterparts for binary DRE, as well as new objectives specific to multi-distribution DRE. We demonstrate the effectiveness of our framework, and study and compare the empirical performance of its different instantiations on various downstream tasks that rely on accurate multi-distribution density ratio estimation.

\section{Preliminaries}\label{sec:preliminary}
\subsection{Multi-class Experiments}\label{sec:multi-class-exp}
In multi-class experiments, we have a pair of random variables $(X, Y) \in \gX \times \gY$ with joint distribution $D(X, Y)$, where $\gX$ is the sample space and $\gY = [k] \defeq \{1, \ldots, k\}$ is the finite label space. 
Define the probability simplex as $\Delta_k \defeq \{\vp \in \sR^k_{\ge0} | \mathbf{1}^\top \vp = 1\}$. 
According to chain rule of probability, any joint distribution $D(X,Y)$ can be decomposed into class priors $\pi_i \defeq \sP(Y=i)$ and class conditionals $P_i(x) \defeq \sP(X=x|Y=i)$ for $i \in [k]$, or into sample marginal $M(x) \defeq \sP(X=x)$ and class probability function $\veta: \gX \to \Delta_k$ (i.e., $\eta_i(x) = \sP(Y=i|X=x)$). We write  $\veta(x)$ as a vector $\veta$ and omit $x$ when it is clear from context. Thus we can also represent the joint distribution as $D = (\bm{\pi}, P_1, \ldots, P_k)$ (where $\vpi \in \Delta_k$) or $(M, \veta)$. For any $i \in [k]$, we assume $P_i$ has density $p_i$ with respect to the Lebesgue measure.

\textbf{Remark on notations.} To avoid confusion, we would like to emphasize that the class probability is denoted as $\eta_i(x) = \sP(Y=i|X=x)$ and the class conditional is denoted as $P_i(x) = \sP(X=x|Y=i)$ with density $p_i(x)$. The former further satisfies the normalization constraint: $\forall x \in \gX, \sum_{i=1}^k \eta_i(x) = 1$, while $i$ in the latter one only serves as the index for $k$ different distributions.

In multi-class classification, given independent and identically distributed (i.i.d.) samples from the joint distribution $D(X,Y)$, we want to learn a probabilistic classifier $\heta: \gX \to \Delta_k$ to approximate the true class probability function $\veta$ by minimizing the following $\ell$-risk: 
\begin{equation}
    \gL_\text{CPE}(\heta; D) = \mathbb{E}_{D(x,y)} [\ell(y, \heta(x))] = \mathbb{E}_{x \sim M}[\mathbb{E}_{y \sim \veta(x)} [\ell(y, \heta(x))]] = \mathbb{E}_{x \sim M} [L(\veta(x), \heta(x))]
    \label{eq:l-risk}
\end{equation}
where $\ell: [k] \times \Delta_k \to \sR$ is the \emph{loss} function for using the class predictor $\heta(x)$ when the true class is $y$, and $L: \Delta_k \times \Delta_k \to \sR$ is the \emph{expected loss} of $\heta(x)$ under the true class probability $\veta(x)$.

\begin{definition}[Proper loss]
A loss function $\ell$ is proper if the corresponding expected loss satisfies: $\forall P, Q \in \Delta_k, L(P,Q) \geq L(P,P)$. It is strictly proper if the equality holds only when $P=Q$.
\end{definition}
In statistical decision theory \citep{gneiting2007strictly}, the negative proper loss is also called \emph{proper scoring rule} (i.e., $S(y, \hat{\veta}(x)) = - \ell(y, \hat{\veta}(x))$), which assesses the utility of the prediction.
Properness of a loss is desirable in multi-class classification because it encourages the class probability estimator $\heta$ to match the true class probability function $\veta$. An important property of proper loss is summarized in the following theorem:

\begin{definition}[Bregman divergence]\label{def:bregman-divergence}
Given a differentiable convex function $\phi: \gS \to \mathbb{R}$ defined on a convex set $\gS \subset \mathbb{R}^d$ and two points $\vx,\vy \in \gS$, the Bregman divergence from $\vx$ to $\vy$ is defined as:
\begin{equation}
    \breg_\phi(\vx,\vy) \defeq \phi(\vx) - \phi(\vy) - \langle \vx - \vy, \nabla \phi(\vy) \rangle
\end{equation}
\end{definition}

\begin{restatable}[\citep{gneiting2007strictly}; Proposition 7 in \citep{vernet2011composite}]{theorem}{classificationBregman}\label{thm:regret-bregman}
Given a proper loss $\ell$ and the corresponding expected loss $L$, for any $P, Q \in \Delta_k$, the generalized entropy function $\uL(P) \defeq \inf_{Q \in \Delta_k} L(P, Q) = L(P,P)$ is concave; when $\uL$ is differentiable, the regret or excess risk of a predictor $Q$ over the Bayes-optimal $P$ is the Bregman divergence induced by the convex function $f=-\uL$:
\begin{equation}
    \reg(P, Q; \ell) \defeq L(P,Q) - L(P,P) = \breg_f(P, Q)
\end{equation}
\end{restatable}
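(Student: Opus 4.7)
The proof naturally splits into three assertions: (i) $\uL(P) = L(P,P)$; (ii) $\uL$ is concave; (iii) the regret coincides with $\breg_f(P,Q)$ for $f = -\uL$. My plan is to leverage one central observation, namely that $L(P,Q)$ is \emph{linear} in $P$ for every fixed $Q$, since by definition
\begin{equation*}
L(P,Q) \;=\; \mathbb{E}_{y\sim P}[\ell(y,Q)] \;=\; \sum_{i=1}^{k} P_i\,\ell(i,Q) \;=\; \langle P, \ell(\cdot,Q)\rangle,
\end{equation*}
where $\ell(\cdot,Q) \in \mathbb{R}^k$ denotes the vector of loss values. This linearity drives everything that follows.

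First I would establish (i) directly from the definition of proper loss: the inequality $L(P,Q)\ge L(P,P)$ valid for all $Q\in\Delta_k$ shows that the infimum is attained at $Q=P$, hence $\uL(P)=L(P,P)$. For (ii), concavity of $\uL$ follows immediately from the linearity observation: $\uL$ is a pointwise infimum (over $Q\in\Delta_k$) of the family of affine functions $P\mapsto L(P,Q)$, and an infimum of affine functions is concave.

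For (iii), the heart of the argument is an envelope-type identity. Since $L(P,Q)$ is affine in $P$ and, by properness, is minimized over $Q$ at $Q=P$, the gradient of $\uL$ at $Q$ can be identified with the coefficients of this affine map evaluated at $Q$. Concretely, writing $L(P,Q)=\langle P,\ell(\cdot,Q)\rangle$ and using $\uL(Q)=L(Q,Q)=\langle Q,\ell(\cdot,Q)\rangle$, I would argue that
\begin{equation*}
\nabla \uL(Q) \;=\; \ell(\cdot,Q)
\end{equation*}
(modulo the freedom of adding a constant vector $c\mathbf{1}$, which is irrelevant because $\langle P-Q,\,c\mathbf{1}\rangle = 0$ on the simplex). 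Taking any such representative of the gradient, we then get the key linearization
\begin{equation*}
L(P,Q) \;=\; \langle P,\nabla \uL(Q)\rangle \;=\; \uL(Q) + \langle P-Q,\,\nabla \uL(Q)\rangle.
\end{equation*}
Subtracting $\uL(P)$ from both sides and setting $f=-\uL$ yields
\begin{equation*}
\reg(P,Q;\ell) \;=\; L(P,Q) - \uL(P) \;=\; -\uL(P) + \uL(Q) + \langle P-Q,\nabla \uL(Q)\rangle \;=\; \breg_f(P,Q),
\end{equation*}
which is exactly the claimed identity.

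The main obstacle is the identification $\nabla \uL(Q) = \ell(\cdot,Q)$ on the simplex, which is not a full-dimensional domain, so ``gradient'' is only defined up to the normal direction $\mathbf{1}$. I would handle this carefully either by considering a differentiable extension of $\uL$ to an open neighborhood of the relative interior of $\Delta_k$, or by working intrinsically on the affine hull $\{\vp:\mathbf{1}^\top\vp=1\}$ and noting that all inner products $\langle P-Q,\cdot\rangle$ annihilate the normal direction; either way the identity above is unambiguous. With this technicality resolved, the remaining computation is routine and the theorem follows.
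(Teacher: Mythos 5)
Your proof is correct and follows essentially the same route as the paper's: properness pins the infimum at $Q=P$, linearity of $L(\cdot,Q)$ in its first argument identifies $\ell(\cdot,Q)$ as a (super)gradient of $\uL$ at $Q$, and subtracting $\uL(P)$ from the resulting linearization $L(P,Q)=\uL(Q)+\langle P-Q,\ell(\cdot,Q)\rangle$ gives the Bregman identity with $f=-\uL$. The only cosmetic differences are that you get concavity as an infimum of affine functions rather than by the paper's explicit convex-combination computation, and you are in fact more careful than the paper in flagging that the gradient of $\uL$ on the simplex is only determined up to multiples of $\mathbf{1}$, an ambiguity that is harmless since $\langle P-Q, c\mathbf{1}\rangle=0$.
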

Given the Bregman divergence representation of the point-wise regret in Theorem~\ref{thm:regret-bregman} and the $\ell$-risk in Equation~(\ref{eq:l-risk}), the excess risk of a class probability estimator $\heta$ over the Bayes optimal $\veta$ is:
\begin{equation}\label{eq:breg-represent-regret}
\begin{aligned}
    \reg(\heta; M, \veta, \ell) \defeq & \gL_\text{CPE}(\heta; D) - \gL_\text{CPE}(\veta; D) = \mathbb{E}_{M(x)} [L(\veta(x), \heta(x)) - L(\veta(x), \veta(x))] \\
    =& \mathbb{E}_{M(x)} [\breg_f(\veta(x), \heta(x))]
\end{aligned}
\end{equation}

\subsection{Multi-distribution $f$-Divergence}

Csisza\'r's $f$-divergence is a popular way to measure the discrepancy between two probability distributions. Specifically, given two distributions $P, Q$ and a convex function $f:\R_{+}\to \R\cup\{\pm\infty\}$ satisfying $f(1) = 0$, the $f$-divergence between $P$ and $Q$ is defined as $\divg_f(P||Q) = \mathbb{E}_{Q} [f(\mathrm{d} P / \mathrm{d} Q)]$. In the following, we will introduce the multi-distribution extension of $f$-divergence \citep{garcia2012divergences}.

\begin{definition}[Multi-distribution $f$-divergence]
For $k$ probability distributions $P_1,\ldots,P_k$ on a common probability space $(\gX,\sigma(\gX))$ with densities $p_1, \ldots, p_k$, given multi-variate closed convex function $f:\R_{+}^{k-1}\to \R\cup\{\pm\infty\}$ satisfying $f(\1)=0$, the multi-distribution $f$-divergence between $P_1,\ldots,P_{k-1}$ and $P_k$ is defined as:
\begin{equation}
    \divg_f\left(P_1,\ldots,P_{k-1}||P_k\right) = \E_{p_k(x)} \left[f\left(\frac{p_1(x)}{p_k(x)},\ldots, \frac{p_{k-1}(x)}{p_k(x)}\right)\right]
    \label{eq:multi-f-divergence}
\end{equation}
\end{definition}

\subsection{Connecting Density Ratios and Class Probabilities via Link Function}\label{sec:multi-class-link-function}
Inspired by the definition in \eqref{eq:multi-f-divergence}, we consider the following canonical density ratio vector (more discussion about this choice can be found in Section~\ref{sec:multi-class-dre-method}): $\vr(x) = (r_1(x), \ldots, r_{k}(x))$ where $r_i(x) \defeq p_i(x) / p_k(x)$ and $r_k(x)=1$. Then we can connect a density ratio vector $\vr(x) \in \R_{+}^{k-1} \times \{1\}$ and a class probability vector $\veta(x) \in \Delta_k$ via an invertible link function.

According to Bayes' theorem, we have:
\begin{equation}\label{eq:bayes-rule}
\frac{\sP(X=x, Y=i)}{\sP(X=x, Y=k)} =\frac{\pi_i p_i(x)}{\pi_k p_k(x)} = \frac{M(x)\eta_i(x)}{M(x)\eta_k(x)}\Leftrightarrow  r_i(x) = \frac{p_i(x)}{p_k(x)} =  \frac{\pi_k}{\pi_i}\cdot\frac{\eta_i(x)}{\eta_k(x)}.
\end{equation}
Thus we define the following multi-distribution link function $\Psi_{\mathrm{dr}}:\Delta^k\rightarrow \R_{+}^{k-1} \times \{1\}$ as a natural generalization of the binary DRE link function \citep{menon2016linking,vernet2011composite}:
\begin{equation}\label{eq:DRE-link} 
[\Psi_{\mathrm{dr}}(\veta(x))]_i \defeq \frac{\pi_k}{\pi_i}\cdot \frac{\eta_i(x)}{\eta_k(x)} = r_i(x),~\text{for all}~i\in[k].
\end{equation}
Given \eqref{eq:DRE-link} and the normalization constraint $\sum_{i \in [k]} \eta_i = 1$, we obtain the inverse link function:
\begin{equation}\label{eq:DRE-link-inverse} 
[\Psi^{-1}_{\mathrm{dr}}(\vr(x))]_i \defeq \frac{\pi_i r_i(x)}{\sum_{j\in[k]}\pi_j r_j(x)}=\eta_i(x),~\text{for all}~i\in[k].
\end{equation}
Thus given knowledge of the prior distribution $\vpi$ (which can also be easily estimated from empirical samples), one can transform a class probability estimator into a density ratio estimator via $\hat{\vr}(x) = \Psi_{\mathrm{dr}}(\heta(x))$ and vice versa via $\heta(x) = \Psi^{-1}_{\mathrm{dr}}(\hat{\vr}(x))$.

\section{A Unified Framework for Multi-distribution DRE}
\subsection{Multi-distribution Density Ratio Estimation Problem Setup}\label{sec:multi-class-dre-setup}
Following the basic formulation of multi-class experiments in Section~\ref{sec:multi-class-exp}, we now introduce the problem setup of multi-distribution density ratio estimation (DRE). Recall that $\gX$ is the common data domain and $P_1, \ldots, P_k$ are $k$ different distributions defined on $\gX$ with densities $p_1, \ldots, p_k$. Suppose we are given $n_i$ i.i.d. samples $\{x_j^{(i)}\}_{j=1}^{n_i}$ from each distribution $P_i$. The goal of multi-distribution DRE is to estimate the density ratios between all pairs of distributions $\{r_{ij} \defeq p_i/p_j\}_{i,j \in [k]}$ from the i.i.d. datasets $\{\{x_j^{(i)}\}_{j=1}^{n_i}\}_{i=1}^k$. 
In this paper, we assume that the density ratios are always well-defined on domain $\gX$ (e.g., when the distributions have strictly positive densities), which is also a common assumption in binary DRE problem \citep{kanamori2009least,kato2021non}.

A naive approach towards this problem is to separately estimate each density $p_i$ from $\{x_j^{(i)}\}_{j=1}^{n_i}$ and then plug in $p_i$ and $p_j$ to get $r_{ij}$. However, as previous theoretical works \citep{kpotufe2017lipschitz,nguyen2007estimating,kanamori2012statistical,que2013inverse} suggest, directly estimating density ratios has many advantages in practical settings. Specifically, we know that (1) optimal convergence rates depend only on the smoothness of the density ratio and not on the densities; (2) optimal rates depend only on the intrinsic dimension of data, thus escaping the curse of dimension in density estimation. Inspired by these observations in binary DRE, this paper aims to develop a general framework for directly estimating multi-distribution density ratios. Moreover, we also theoretically prove that various interesting facts \citep{menon2016linking,sugiyama2012density}, which hold in the binary case, extend to our multi-distribution case in Section~\ref{sec:theory}.

While most previous works focus on DRE in binary cases, multi-distribution DRE has many important downstream applications.
For example, given any integrable function $\phi: \gX \to \sR$, suppose we want to use importance sampling to estimate the expectation of $\phi$ with respect to a target distribution $Q$ with density $q$ w.r.t. the base measure:
\begin{equation}
    \mathbb{E}_{q(x)} [\phi(x)] = \int_\gX q(x) \phi(x) \mathrm{d}x = \int_\gX p(x) \frac{q(x)}{p(x)} \phi(x) \mathrm{d}x = \mathbb{E}_{p(x)}\left[r(x) \cdot \phi(x)\right]
\end{equation}
where we use the density ratio $r = p/q$ to correct the bias caused by using samples from the proposal distribution $p$ rather than the target distribution $q$.
However, in practice, finding a good proposal is critical yet challenging \citep{owen2000safe}. An alternative and more robust strategy is to use a population of different proposals (sampling schemes) and use a set of density ratios to correct the bias, which is also known as multiple importance sampling (MIS) \citep{cappe2004population,elvira2015efficient}. Given $k$ different proposals $p_1, \ldots, p_k$, the MIS estimation of the expectation is given by:
\begin{equation}
    \mathbb{E}_{q(x)} [\phi(x)] = \sum_{i=1}^k \omega_i \mathbb{E}_{p_i(x)}\left[\frac{q(x)}{p_i(x)}\phi(x)\right]
\end{equation}
where $\omega_i$ is the weight for each proposal $p_i$ and satisfies $\sum_i \omega_i = 1$. Thus a more efficient and accurate multi-distribution DRE method will lead to better MIS. In the context of multi-source off-policy policy evaluation \citep{kallus2021optimal}, the proposals correspond to a set of demonstration policies and the target distribution is the query policy whose performance we want to evaluate from the offline multi-souce demonstrations; in the context of multi-domain transfer learning setting (covariate shift adaptation) \citep{bickel2008multi,dinh2013fidos}, the proposals correspond to a set of data generating distributions (e.g. multiple source domains or various data augmentation strategies) and the target is the test distribution we care about. Estimating multi-distribution density ratios also allows us to compute important information quantities among multiple random variables such as 
the multi-distribution $f$-divergence in Equation~(\ref{eq:multi-f-divergence}), which can be used to analyze various kinds of discrepancy and correlations between multiple random variables and further has the potential of inspiring new generative models for multiple marginal matching problem \citep{cao2019multi}.

\subsection{Multi-distribution DRE via Bregman Divergence Minimization}\label{sec:multi-class-dre-method}
Inspired by the success of Bregman divergence minimization for unifying various DRE methods in the binary case \citep{sugiyama2012density}, in this section, we propose a general framework for solving the multi-distribution density ratio estimation problem. First, we discuss our modeling choices. Although our goal is to estimate ${k \choose 2}$ density ratios (between all possible pairs), the solution set $\{r_{ij} \defeq p_i/p_j\}_{i,j \in [k]}$ actually has $k-1$ degrees of freedom (e.g., $r_{ik}=r_{ij} \cdot r_{jk}$). Thus without loss of generality, we parametrize the following $k-1$ density ratio models $\hat{\vr}_\vtheta = (\hat{r}_{\theta_1}, \ldots, \hat{r}_{\theta_{k-1}})$ to approximate the true canonical density ratios $\vr = (r_1, \ldots, r_{k-1})$, where $r_i \defeq p_i / p_k$ for $i \in [k-1]$. For the simplicity of notation, we will omit the dependence on the parameters $\vtheta$ and write our density ratio models as $\hat{\vr} = (\hat{r}_1, \ldots, \hat{r}_{k-1})$. An advantage of such modeling choice is that any density ratio can be recovered within one step of computation $\frac{p_i}{p_j} = \frac{p_i / p_k}{p_j / p_k} = \frac{r_i}{r_j}$, thus avoiding large compounding error while naturally ensuring consistency within the solution set (i.e., if we parametrize $\hat{r}_{ij}$, $\hat{r}_{jk}$ and $\hat{r}_{ik}$ respectively, we have to make sure they satisfy $\hat{r}_{ik}=\hat{r}_{ij} \cdot \hat{r}_{jk}$).

Since our goal is to optimize $\hat{\vr}$ to approximate the true density ratios $\vr$, we consider to use Bregman divergence (Def.~\ref{def:bregman-divergence}) to measure the discrepancy between $\vr$ and $\hat{\vr}$.
Specifically, for any strictly convex function $f: \mathbb{R}^{k-1}_+ \to \mathbb{R}$ and $\forall x \in \gX$, we have the following point-wise optimization problem:
\begin{equation}
    \min_{\hat{\vr}(x) \in \sR^{k-1}_+} \breg_f(\vr(x),\hat{\vr}(x)) = f(\vr(x)) - f(\hat{\vr}(x)) - \langle \nabla f(\hat{\vr}(x)), \vr(x) - \hat{\vr}(x) \rangle
\end{equation}
which corresponds to the difference between the value of $f$ at $\vr$, and the value of the first-order Taylor expansion of $f$ around point $\hat{\vr}$ evaluated at point $\vr$. Although the current formulation can be understood as a regression
problem from $\hat{\vr}(x)$ to the true density ratios $\vr(x)$, we actually only have i.i.d. samples $x \sim p_1, \ldots, p_k$ instead of the true targets $\vr(x)$. In this case, we consider to use the following expected Bregman divergence to measure the overall discrepancy from the true density ratios $\vr$ to the density ratio models $\hat{\vr}$:
\begin{align}
    \gL_\text{DRE}(\hat{\vr}; D) =& \int_\gX p_k(x) \Big(f(\vr(x)) - f(\hat{\vr}(x)) - \langle \nabla f(\hat{\vr}(x)), \vr(x) - \hat{\vr}(x) \rangle\Big) \mathrm{d}x \label{eq:multi-class-dre-orig}\\
    =& \mathbb{E}_{p_k(x)}\left[\langle \nabla f(\hat{\vr}(x)), \hat{\vr}(x) \rangle - f(\hat{\vr}(x))\right] - \sum_{i \in [k-1]} \mathbb{E}_{p_i(x)} [\partial_i f(\hat{\vr}(x))] + C
\end{align}
where $C \defeq \int_\gX p_k(x) f(\vr(x)) \mathrm{d}x = \divg_f(P_1,\ldots,P_{k-1}\|P_k)$ is a constant with respect to $\hat{\vr}$ and the equality comes from the fact that $p_k \cdot (r_1, \ldots, r_{k-1}) = (p_1, \ldots, p_{k-1})$ according to the definition of $\vr$. The rationale behind the above choice is that it allows us to get an unbiased estimation of the discrepancy between $\vr$ and $\hat{\vr}$ only using i.i.d. samples from $p_1, \ldots, p_k$. Specifically, since $C$ is a constant, we have the following optimization problem over $\hat{\vr}$ to approximate the true density ratios (where each expectation $\mathbb{E}_{p_i}$ can be empirically estimated using samples from $p_i$):
\begin{equation}
    \min_{\hat{\vr}:\gX \to \sR^{k-1}_+}
    \mathbb{E}_{p_k(x)}\left[\left\langle \nabla f(\hat{\vr}(x)), \hat{\vr}(x) \right\rangle - f(\hat{\vr}(x))\right] - \sum_{i \in [k-1]} \mathbb{E}_{p_i(x)} \left[\partial_i f(\hat{\vr}(x))\right]
    \label{eq:multi-class-dre}
\end{equation}

Interestingly, the above multi-distribution DRE formulation, which is based on Bregman divergence minimization, can be alternatively derived from the perspective of variational estimation of multi-distribution $f$-divergence. In the following, We briefly discuss such an interpretation of Eq.~(\ref{eq:multi-class-dre}).

Based on Fenchel duality, we can represent any strictly convex function $f:\R_{+}^{k-1}\to \R\cup\{+\infty\}$ through its conjugate function $f^*(\vs) \defeq \max_{\vr\in\sR^{k-1}_{+}} \langle \vs, \vr \rangle - f(\vr)$ as:
\begin{equation}
    f(\vr(x)) = \max_{\vs: \gX \to \sR^{k-1}} \langle \vr(x), \vs(x)\rangle - f^*(\vs(x)),~~\text{for any}~x\in\gX.\label{eq:fenchel-duality}
\end{equation}
In order to estimate the multi-distribution $f$-divergence defined in Eq.~(\ref{eq:multi-f-divergence}) only using samples from $P_1, \ldots, P_k$ (instead of their density information), we consider the following variational representation of multi-distribution $f$-divergence by substituting Eq.~(\ref{eq:fenchel-duality}) into Eq.~(\ref{eq:multi-f-divergence}):
\begin{align}
    \divg_f(P_1,\ldots, P_{k-1}||P_k)= -\min_{\vs:\gX\rightarrow \R^{k-1}}\left[-\sum_{i\in[k-1]}\E_{p_i(x)}[\vs(x)]_i + \E_{p_k(x)}f^*(\vs(x))\right]\label{eq:obj-fdivg}
\end{align}

We then have the following lemma revealing the equivalence between the optimization problem in Eq.~(\ref{eq:multi-class-dre}) and Eq.~(\ref{eq:obj-fdivg}).

\begin{restatable}[DRE via variational estimation of multi-distribution $f$-divergence]{proposition}{restateEquivalence}\label{lem:equivalence-opt}
Given a strictly convex function $f:\sR^{k-1}_+\rightarrow \R\cup\{+\infty\}$, the optimization problem in Eq.~(\ref{eq:multi-class-dre}) (induced by minimizing expected Bregman divergence $\breg_f(\vr, \hat{\vr})$) is equivalent to the one in Eq.~(\ref{eq:obj-fdivg}) (for variational estimation of multi-distribution $f$-divergence) under change of variables satisfying: $\nabla f(\hat{\vr}(x))=\vs(x),~\forall x\in\gX$.
\end{restatable}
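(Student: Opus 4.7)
The plan is to prove equivalence by direct substitution, combining the first-order optimality identity in Fenchel duality with the trivial coordinate identity $[\nabla f(\hat{\vr})]_i = \partial_i f(\hat{\vr})$. The two objectives should match pointwise in $x$ once we rewrite the first term in Eq.~(\ref{eq:multi-class-dre}) as $f^*$ evaluated at $\nabla f(\hat{\vr})$ and interpret the $i$-th partial derivative as the $i$-th coordinate of $\vs$.

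First I would invoke the well-known identity: for a strictly convex, differentiable $f$, the supremum in $f^*(\vs) = \sup_{\vr \in \sR^{k-1}_+}\{\langle \vs,\vr\rangle - f(\vr)\}$ is attained precisely at the unique $\vr$ satisfying $\nabla f(\vr) = \vs$, so that
\begin{equation*}
    f^*(\nabla f(\vr)) \;=\; \langle \nabla f(\vr), \vr\rangle - f(\vr).
\end{equation*}
Applying this pointwise at $\vr \leftarrow \hat{\vr}(x)$ under the stipulated change of variables $\vs(x) = \nabla f(\hat{\vr}(x))$, the first expectation in Eq.~(\ref{eq:multi-class-dre}) becomes exactly $\mathbb{E}_{p_k(x)}[f^*(\vs(x))]$. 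Next, since $\partial_i f(\hat{\vr}(x)) = [\nabla f(\hat{\vr}(x))]_i = [\vs(x)]_i$ for each $i \in [k-1]$, the second sum in Eq.~(\ref{eq:multi-class-dre}) rewrites as $\sum_{i \in [k-1]} \mathbb{E}_{p_i(x)}[[\vs(x)]_i]$. Assembling the two pieces, the DRE objective becomes $-\sum_{i \in [k-1]} \mathbb{E}_{p_i(x)}[[\vs(x)]_i] + \mathbb{E}_{p_k(x)}[f^*(\vs(x))]$, which is the objective of Eq.~(\ref{eq:obj-fdivg}) verbatim.

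The one point to address carefully---rather than a real obstacle---is that strictly speaking the outer variable $\hat{\vr}$ ranges over functions $\gX \to \sR^{k-1}_+$, whereas $\vs$ in Eq.~(\ref{eq:obj-fdivg}) ranges over functions $\gX \to \R^{k-1}$, and the correspondence $\vs = \nabla f(\hat{\vr})$ is in general a bijection only onto $\mathrm{range}(\nabla f)$. I would resolve this in one sentence: strict convexity of $f$ makes $\nabla f$ injective on the interior of its domain, so the substitution is well-defined and invertible onto its image; for $\vs$ outside $\mathrm{range}(\nabla f)$, the Fenchel--Young inequality $\langle \vs, \vr\rangle - f(\vr) \le f^*(\vs)$ shows the outer minimization may always restrict to $\vs \in \mathrm{range}(\nabla f)$ without changing the optimum, so the two problems share the same minimizers under the bijection $\vs = \nabla f(\hat{\vr})$.
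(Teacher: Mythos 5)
Your proposal is correct and follows essentially the same route as the paper: both reduce the claim to the Fenchel--Young equality $f^*(\nabla f(\hat{\vr})) = \langle \nabla f(\hat{\vr}), \hat{\vr}\rangle - f(\hat{\vr})$ applied pointwise under the substitution $\vs = \nabla f(\hat{\vr})$, after which the two objectives coincide term by term (the paper derives this equality from $f^{**}=f$ and a first-order condition rather than quoting it). Your closing remark on restricting the outer minimization to $\mathrm{range}(\nabla f)$ is a small point of extra care that the paper's proof leaves implicit.
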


\section{Connecting Losses for Multi-class Classification and DRE}\label{sec:theory}

In this section, we rederive a similar result to \citep{nock2016scaled} by directly generalizing the Bregman identity in \citep{menon2016linking} to multi-variable case, which established the theoretical connection between multi-distribution DRE and multi-class classification.

In Section~\ref{sec:multi-class-exp}, we have shown that the exact minimization of the excess risk for any strictly proper loss $\ell$ results in the true class probability function $\veta$, and consequently gives us the true density ratio $\vr$ through the link function $\Psi_{\mathrm{dr}}(\veta)$. 
In the following, we take a further step to show that essentially the procedure of minimizing any strictly proper loss is equivalent to minimizing an expected Bregman divergence between the true density ratios $\vr$ and the approximate density ratios $\hat{\vr}$, thus generalizing the theoretical results in binary case \citep{menon2016linking} to the multi-distribution case and justifying the validity of using any strictly proper scoring rule (e.g. Brier score \citep{brier1950verification} and pseudo-spherical score \citep{good1971comment}) for multi-distribution DRE. 
All proofs for this section can be found in Appendix~\ref{app:proof-sec-4}.

We start by introducing the following multivariate Bregman identity.
\begin{restatable}[Multivariate Bregman Identity]{lemma}{bregidentity}\label{lem:breg-identity}
Given a convex function $f:\R^{k-1}\to \R$,
we can define an associated function $\fopt(u_1,\ldots,u_{k-1}) = (1+\sum_{i\in[k-1]}u_i)f\left(\frac{1}{1+\sum_{i\in[k-1]}u_i}\cdot \vu\right)$. We can show that (i) $\fopt$ is convex and (ii) for any $\vu,\vv\in\R^{k-1}$, their associated Bregman divergences satisfy:
\begin{equation}\label{eq:bregman-identity-basic}
\breg_f\left(\frac{1}{1+\sum_{i\in[k-1]}u_i}\cdot \vu,\frac{1}{1+\sum_{i\in[k-1]}v_i}\cdot \vv\right) = \frac{1}{1+\sum_{i\in[k-1]}u_i}\breg_{\fopt}(\vu,\vv).
\end{equation}
\end{restatable}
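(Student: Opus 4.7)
The plan is to prove the two parts separately, first handling convexity by recognizing $\fopt$ as a perspective-style construction, then establishing the Bregman identity by a direct gradient computation.

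For part (i), I observe that $\fopt(\vu) = t \cdot f(\vu/t)$ where $t = 1 + \sum_{i} u_i$ is an affine function of $\vu$. Consider the standard perspective function $g(\vu,t) \defeq t f(\vu/t)$ defined on $\R^{k-1} \times \R_{++}$; this is known to be jointly convex in $(\vu,t)$ whenever $f$ is convex (this is the classical perspective transform preserving convexity). Since the map $\vu \mapsto (\vu, 1 + \sum_i u_i)$ is affine, and convexity is preserved under composition with affine maps, $\fopt$ is convex on the region where $1 + \sum_i u_i > 0$.

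For part (ii), I will compute $\nabla \fopt(\vv)$ explicitly, then substitute into the definition of $\breg_{\fopt}(\vu,\vv)$ and simplify. Let $t \defeq 1+\sum_i u_i$, $s \defeq 1+\sum_i v_i$, $\vp \defeq \vu/t$, and $\vq \defeq \vv/s$. Using the product and chain rules together with $\partial_j s = 1$ and $\partial_j(v_l/s) = (\delta_{jl} - q_l)/s$, a short calculation yields
\begin{equation}
\partial_j \fopt(\vv) = f(\vq) + \partial_j f(\vq) - \langle \nabla f(\vq),\vq\rangle,
\end{equation}
so that $\nabla \fopt(\vv) = \nabla f(\vq) + \bigl(f(\vq) - \langle \nabla f(\vq),\vq\rangle\bigr)\vone$. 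Writing $\vu - \vv = t\vp - s\vq$ and $\sum_j(u_j - v_j) = t-s$, I expect the inner product to collapse to
\begin{equation}
\langle \nabla \fopt(\vv), \vu - \vv\rangle = t\,\langle \nabla f(\vq), \vp - \vq\rangle + (t-s)\,f(\vq).
\end{equation}

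Substituting this into $\breg_{\fopt}(\vu,\vv) = \fopt(\vu) - \fopt(\vv) - \langle\nabla \fopt(\vv),\vu-\vv\rangle$ with $\fopt(\vu) = t f(\vp)$ and $\fopt(\vv) = s f(\vq)$, the $s f(\vq)$ and $(t-s) f(\vq)$ terms combine to give $-t f(\vq)$, leaving a common factor of $t$ outside the remaining bracket $f(\vp) - f(\vq) - \langle \nabla f(\vq), \vp - \vq\rangle = \breg_f(\vp,\vq)$. This yields $\breg_{\fopt}(\vu,\vv) = t\cdot \breg_f(\vp,\vq)$, which rearranges to the claimed identity \eqref{eq:bregman-identity-basic}. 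The main obstacle is purely bookkeeping: correctly tracking the two normalizing scalars $t$ and $s$ in the gradient computation and noticing the cancellation of the $\langle \nabla f(\vq),\vq\rangle$ terms, so that the constant-offset piece of $\nabla\fopt(\vv)$ interacts cleanly with $\sum_j(u_j - v_j)$. No deeper tool beyond the chain rule and definition of Bregman divergence should be required.
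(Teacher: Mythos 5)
Your proposal is correct and follows essentially the same route as the paper's proof: part (i) rests on the joint convexity of the perspective function $t f(\vu/t)$ (you compose it with an affine map, the paper applies it inside the definition of convexity), and part (ii) uses the identical gradient formula $\nabla \fopt(\vv) = \nabla f(\vq) + (f(\vq) - \langle \nabla f(\vq), \vq\rangle)\1$ followed by the same algebraic cancellation. Your intermediate identities check out, so this is a sound reproduction of the paper's argument.
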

One can then apply Lemma~\ref{lem:breg-identity} with $u_i = \frac{\pi_i}{\pi_k} r_i$ and $v_i = \frac{\pi_i}{\pi_k}\hr_i$ for each $i\in[k-1]$ and use the fact that $\breg_{\fopt_\pi}\left(\vr,\hat{\vr}\right) = \breg_{\fopt}(\vu, \vv)$ for $\fopt_\pi(\vr) = \fopt(\frac{1}{\pi_k}\vpi\circ \vr)$ to establish the following connection between the optimality gap of density ratio estimators and class probability estimators, where we use $\va \circ \vb$ to denote the element-wise product between vectors $\va$ and $\vb$, and  $\vpi_{[1:k-1]}\in\R^{k-1}$ as the vector when restricting $\vpi$ onto its first $k-1$ coordinates.

\begin{restatable}{proposition}{corBregIdentity}\label{cor:breg-identity}
For any convex function $f:\R^{k-1}_{+}\rightarrow \R$, and two density ratio vectors $\vr(x)$ and $\hat{\vr}(x)$, one can construct corresponding class probability vectors
$\veta(x) = \Psi_\mathrm{dr}^{-1}(\vr(x))$ and $\hat{\veta}(x) = \Psi_\mathrm{dr}^{-1}(\hat{\vr}(x))$ through the inverse link function in \eqref{eq:DRE-link-inverse}, and obtain:
\begin{equation}\label{eq:bregman-identity-applied}
    \breg_f\left(\veta(x), \hat{\veta}(x)\right) = \frac{\pi_k}{\pi_k+\sum_{i\in[k-1]}\pi_ir_i(x)}\breg_{\fopt_\pi}\left(\vr(x),\hat{\vr}(x)\right)~\text{for all}~x\in\gX,
\end{equation}
where we define the convex function $\fopt_\pi$ induced by some prior distribution $\pi \in\Delta_k $ as 
\begin{equation}\label{eq:fopt-pi-def}
    \fopt_\pi(r_1,\ldots, r_{k-1}) \defeq \left(1+\sum_{i\in[k-1]}\pi_i r_i/\pi_k\right)\cdot f\left(\frac{\vpi_{[1:k-1]} \circ\vr}{\pi_k+\sum_{i\in[k-1]}\pi_i r_i} \right).
\end{equation}
\end{restatable}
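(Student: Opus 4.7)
\textbf{Proof plan for Proposition \ref{cor:breg-identity}.} The plan is to apply the multivariate Bregman identity of Lemma~\ref{lem:breg-identity} with the specific substitution suggested in the remark preceding the statement, namely $u_i \defeq (\pi_i/\pi_k)\, r_i(x)$ and $v_i \defeq (\pi_i/\pi_k)\, \hat r_i(x)$ for $i\in[k-1]$, and then absorb the induced linear reparametrization into $\fopt$ to produce $\fopt_\pi$.

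First I would verify the point-wise identifications on each side of \eqref{eq:bregman-identity-basic}. Using the inverse link \eqref{eq:DRE-link-inverse} together with $r_k(x)=1$, a direct calculation gives
\begin{equation*}
1+\sum_{i\in[k-1]} u_i \;=\; \frac{\pi_k+\sum_{i\in[k-1]} \pi_i r_i(x)}{\pi_k}, \qquad \frac{u_i}{1+\sum_j u_j} \;=\; \frac{\pi_i r_i(x)}{\pi_k+\sum_j \pi_j r_j(x)} \;=\; \eta_i(x),
\end{equation*}
and the same identifications for $\vv$ and $\hat\veta$. Hence Lemma~\ref{lem:breg-identity} yields
\begin{equation*}
\breg_f\bigl(\veta(x)_{[1:k-1]}, \hat\veta(x)_{[1:k-1]}\bigr) \;=\; \frac{\pi_k}{\pi_k+\sum_{i\in[k-1]}\pi_i r_i(x)}\, \breg_{\fopt}(\vu,\vv),
\end{equation*}
which already matches the scalar factor on the right-hand side of \eqref{eq:bregman-identity-applied}.

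Next I would reduce $\breg_{\fopt}(\vu,\vv)$ to $\breg_{\fopt_\pi}(\vr,\hat\vr)$ by observing that the map $\vr \mapsto \vu$ is the invertible linear change of variables $\vu = \mA \vr$ with $\mA \defeq \tfrac{1}{\pi_k}\mathrm{diag}(\vpi_{[1:k-1]})$. A short computation verifies that $\fopt(\mA \vr)$ coincides exactly with the definition of $\fopt_\pi(\vr)$ in \eqref{eq:fopt-pi-def}: the prefactor $1+\sum_i u_i$ becomes $1+\sum_i \pi_i r_i/\pi_k$, and the argument $\vu/(1+\sum_i u_i)$ becomes $\vpi_{[1:k-1]}\circ\vr/(\pi_k+\sum_i\pi_i r_i)$. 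Since Bregman divergences are invariant under invertible linear reparametrizations of the generator (one checks directly that $\nabla(f\circ\mA)(\vr)=\mA^\top\nabla f(\mA\vr)$ and that the inner-product term $\langle \mA^\top\nabla\fopt(\vv),\vr-\hat\vr\rangle = \langle \nabla\fopt(\vv),\vu-\vv\rangle$ absorbs $\mA$ cleanly), one obtains $\breg_{\fopt}(\vu,\vv) = \breg_{\fopt_\pi}(\vr,\hat\vr)$. Combining the two displays gives \eqref{eq:bregman-identity-applied}.

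The main obstacle, and the only nontrivial bookkeeping step, is keeping the $\pi_k$ factors consistent between Lemma~\ref{lem:breg-identity}'s unnormalized form (which has a generic $1+\sum u_i$) and the $\pi$-weighted form appearing in $\fopt_\pi$: the $1/\pi_k$ inside $\mA$ must be tracked so that the prefactor in $\fopt_\pi$ reads $1+\sum \pi_i r_i/\pi_k$ rather than $1+\sum \pi_i r_i$, and correspondingly the denominator in the inner $f$-argument is $\pi_k+\sum \pi_i r_i$. Everything else—the convexity of $\fopt_\pi$, which follows from convexity of $\fopt$ composed with an invertible linear map, and the passage from the pointwise formula to arbitrary $x\in\gX$—is routine.
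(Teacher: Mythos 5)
Your proposal is correct and follows essentially the same route as the paper's own proof: the substitution $u_i = (\pi_i/\pi_k) r_i$, $v_i = (\pi_i/\pi_k)\hat r_i$ into Lemma~\ref{lem:breg-identity}, the identification of the normalized vectors with $\veta$, $\hat\veta$ via the inverse link, and the observation that the linear change of variables $\vu = \tfrac{1}{\pi_k}\vpi_{[1:k-1]}\circ\vr$ preserves the Bregman divergence (via the chain rule), yielding $\breg_{\fopt}(\vu,\vv)=\breg_{\fopt_\pi}(\vr,\hat\vr)$. Your explicit tracking of the $\pi_k$ factors and the restriction of $\veta$ to its first $k-1$ coordinates is, if anything, slightly more careful than the paper's write-up.
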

Combining Proposition~\ref{cor:breg-identity} with the Bregman divergence representation of the point-wise regret for a proper risk $\ell$ for multi-class classification in~\eqref{eq:breg-represent-regret}, we provide the following main theorem that interprets the minimization of multi-class classification regret as multi-distribution DRE under expected Bregman divergence minimization.

\begin{restatable}{theorem}{thmBregmanDRE}\label{thm:bregman-dre}
Given any strictly proper loss $\ell$, for any joint data distribution $D(X,Y)$ with class prior $\pi\in\Delta_k$, the multi-class classification regret defined in~\eqref{eq:breg-represent-regret} satisfies that:
\begin{equation}
    \reg(\heta; M, \veta, \ell) = \pi_k \E_{ p_k(x)}\breg_{\fopt_\pi}(\vr(x),\hat{\vr}(x)),
\end{equation}
for $\fopt_\pi$ as defined in~\eqref{eq:fopt-pi-def}, and $\vr = \Psi_\mathrm{dr}(\veta)$ and $\hat{\vr} = \Psi_\mathrm{dr}(\hat{\veta})$ as defined in~\eqref{eq:DRE-link}.
\end{restatable}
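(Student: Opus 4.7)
The plan is to chain together two results already established in the paper: the Bregman representation of the excess risk of a strictly proper loss given in \eqref{eq:breg-represent-regret}, and the link-function-based Bregman identity of Proposition~\ref{cor:breg-identity}. The only additional ingredient is a short change-of-measure argument that converts an expectation under the sample marginal $M$ into an expectation under the reference density $p_k$.

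Concretely, I would start from \eqref{eq:breg-represent-regret} with $f=-\uL$ (which is convex by Theorem~\ref{thm:regret-bregman}), giving
\begin{equation*}
\reg(\heta; M, \veta, \ell) = \E_{M(x)}\bigl[\breg_f(\veta(x), \hat{\veta}(x))\bigr].
\end{equation*}
Applying Proposition~\ref{cor:breg-identity} pointwise rewrites the Bregman divergence between the class-probability vectors as one between the induced density-ratio vectors $\vr(x)=\Psi_{\mathrm{dr}}(\veta(x))$ and $\hat{\vr}(x)=\Psi_{\mathrm{dr}}(\hat{\veta}(x))$:
\begin{equation*}
\breg_f(\veta(x), \hat{\veta}(x)) \;=\; \frac{\pi_k}{\pi_k + \sum_{i\in[k-1]} \pi_i r_i(x)}\,\breg_{\fopt_\pi}(\vr(x), \hat{\vr}(x)).
\end{equation*}
The last step is to expand the sample marginal as $M(x)=\sum_{i\in[k]}\pi_i p_i(x) = p_k(x)\bigl(\pi_k+\sum_{i\in[k-1]}\pi_i r_i(x)\bigr)$, using $p_i = r_i p_k$. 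Then the product of $M(x)$ with the scaling factor above collapses to $\pi_k\, p_k(x)$, so that
\begin{equation*}
\reg(\heta; M, \veta, \ell) \;=\; \int_\gX \pi_k p_k(x)\, \breg_{\fopt_\pi}(\vr(x), \hat{\vr}(x))\, \mathrm{d}x \;=\; \pi_k\, \E_{p_k(x)}\!\bigl[\breg_{\fopt_\pi}(\vr(x), \hat{\vr}(x))\bigr],
\end{equation*}
which is the claim.

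The ``main obstacle'' here is really just bookkeeping rather than a conceptual hurdle: the definition of $\fopt_\pi$ in \eqref{eq:fopt-pi-def} has been engineered so that the pointwise scaling factor produced by Proposition~\ref{cor:breg-identity} is exactly the Radon--Nikodym factor $\pi_k p_k(x)/M(x)$, at which point the cancellation between measure and scaling factor happens in a single line. All of the conceptual content lives inside Proposition~\ref{cor:breg-identity} (and ultimately in the multivariate Bregman identity of Lemma~\ref{lem:breg-identity}, applied with $u_i = \pi_i r_i/\pi_k$); the present theorem is the clean payoff that makes the equivalence between minimizing a strictly proper classification loss and minimizing an expected Bregman-divergence DRE loss explicit, with the density $p_k$ naturally playing the role of the reference distribution for DRE.
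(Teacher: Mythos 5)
Your proposal is correct and follows essentially the same route as the paper's proof: both start from the Bregman representation of the regret in \eqref{eq:breg-represent-regret}, apply Proposition~\ref{cor:breg-identity} pointwise, and cancel the scaling factor $\pi_k/(\pi_k+\sum_{i\in[k-1]}\pi_i r_i(x))$ against the decomposition $M(x)=\sum_{i\in[k]}\pi_i p_i(x)=p_k(x)\bigl(\pi_k+\sum_{i\in[k-1]}\pi_i r_i(x)\bigr)$. The only cosmetic difference is that the paper first rewrites $\E_{M}$ as $\sum_i \pi_i \E_{p_i}$ and then pulls everything under $\E_{p_k}$ via the ratios $r_i=p_i/p_k$, whereas you perform the same change of measure in a single step.
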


Theorem~\ref{thm:bregman-dre} generalizes a known equivalence between density ratio estimation and class probability estimation in the binary case (see Section 5 in \citep{menon2016linking}), and provides a similar equivalence in the more complicated multi-class experiments. Besides, in comparison to the binary case result, we also provide a simpler proof, loosen the assumptions on the twice-differentiability of convex function $f$ induced by the proper loss $\ell$ (i.e., $f = -\underline{L}$, see Theorem~\ref{thm:regret-bregman} for more details), and generalize the argument to an arbitrary prior distribution $\pi\in\Delta^k$ instead of the uniform prior case $\pi_1=\pi_2=1/2$ considered in~\citep{menon2016linking}. 

Moreover, we notice that multi-distribution $f$-divergence among class conditionals $P_1, \ldots, P_k$ also corresponds to the statistical information measure in multi-class experiments \citep{degroot1962uncertainty} (defined as the gap between the prior and posterior generalized entropy). Since we have established the equivalence between multi-class DRE (\eqref{eq:multi-class-dre}) and variational estimation of multi-distribution $f$-divergence (\eqref{eq:obj-fdivg}), we can show by choosing particular convex functions (associated with the loss $\ell$ for multi-class classification), multi-distribution DRE can be viewed as estimating the statistical information measure in multi-class experiments. See detailed discussions in Appendix~\ref{app:proof-sec-4-info}.

\section{Examples of Multi-distribution DRE}\label{sec:examples}
In the binary density ratio matching under Bregman divergence framework \citep{sugiyama2012density}, we can choose various convex functions to recover popular binary DRE methods such as KLIEP \citep{sugiyama2008direct}, LSIF \citep{kanamori2009least} and Logistic Regression \citep{franklin2005elements}.
In this section, we provide some instantiations of our multi-distribution DRE framework. Specifically, Section~\ref{sec:multi-class-dre-method} suggests that any strictly convex multivariate function $f: \sR^{k-1}_+ \to \sR$ induces a proper loss for multi-distribution DRE, and Section~\ref{sec:theory} justifies that any strictly proper scoring rule composite with $\Psi_\mathrm{dr}$ can also be used for multi-distribution DRE.
We briefly discuss some choices of the convex function or proper scoring rule, and we provide detailed derivations in Appendix~\ref{app:examples-derivations}.

\subsection{Methods Induced by Convex Functions}\label{sec:examples-convex}
\textbf{Multi-class Logistic Regression.}~~From Section~\ref{sec:multi-class-link-function}, we know that there is a one-to-one correspondence between a class probability estimator and a density ratio estimator:
$\hat{\vr} = \Psi_\mathrm{dr} \circ \hat{\veta}$ and $\hat{\veta} = \Psi_\mathrm{dr}^{-1} \circ \hat{\vr}$. For the clarity of presentation, here we assume the class prior distribution $\vpi$ is uniform such that $\hr_i(x) = \hat{\eta}_i(x) / \hat{\eta}_k(x)$ and $\hat{\eta}_i(x) = \hr_i(x) / \sum_{j=1}^k \hr_j(x)$. 
To recover the loss of multi-class logistic regression, we choose the following convex function to be $f(\hr_1, \ldots, \hr_{k-1}) = \frac{1}{k} \sum_{i=1}^{k} \hr_i \log\left(\hr_i / \sum_{j=1}^k \hr_j\right)$.
In this case, the loss in \eqref{eq:multi-class-dre} reduces to:
\begin{equation}
    \frac{1}{k}\mathbb{E}_{p_k(x)} \left[\log\left(\sum_{j=1}^{k} \hr_j(x)\right)\right] - \frac{1}{k}\sum_{i=1}^{k-1} \mathbb{E}_{p_i(x)} \left[\log\left(\frac{\hr_i(x)}{\sum_{j=1}^k \hr_j(x)}\right)\right] = -\left(\frac{1}{k}\sum_{i=1}^k \mathbb{E}_{p_i(x)}[\log \hat{\eta}_i(x)]\right)
\end{equation}
We provide discussions for the general case (non-uniform prior $\vpi$) in Appendix~\ref{app:multi-class-lr-derivations}.
Interestingly, we noticed that the above convex function also gives rise to the multi-distribution Jensen-Shannon divergence \citep{garcia2012divergences} (also known as the information radius \citep{sibson1969information}, $\divg_f(P_1, \ldots, P_k) = \frac{1}{k}\sum_{i=1}^k \KL(P_i\| \frac{1}{k}\sum_{j=1}^k P_j)$) up to a constant of $\log k$.

\textbf{Multi-LSIF.}~~When the convex function associated with the Bregman divergence is chosen to be $f(\hr_1, \ldots, \hr_{k-1}) = \frac{1}{2} \sum_{i=1}^{k-1} (\hr_i - 1)^2 = \frac{1}{2} \|\hat{\vr} - \bm{1}\|^2$,
the loss in \eqref{eq:multi-class-dre} reduces to:
\begin{equation}
    \frac{1}{2} \sum_{i=1}^{k-1} \mathbb{E}_{p_k(x)} \left[\hr_i ^2(x) - 1\right] - \sum_{i=1}^{k-1} \mathbb{E}_{p_i(x)} \left[\hr_i(x) - 1\right]
    = \frac{1}{2} \sum_{i=1}^{k-1} \mathbb{E}_{p_k(x)} \left[(\hr_i(x) - r_i(x))^2\right] - C
\end{equation}
where $C = \mathbb{E}_{p_k(x)} \left[\|\vr(x) - 1\|^2\right]$ is a constant w.r.t. $\hat{\vr}$ and the minimizer to the above loss function matches the true density ratios, which strictly generalizes the Least-Squares Importance Fitting (LSIF) \citep{kanamori2009least} method to the multi-distribution case.

Besides, we also consider the following simple convex functions that either strictly generalize their binary DRE counterparts as above, or lead to completely new methods for multi-distribution DRE:
\begin{itemize}
    \item \textbf{Multi-KLIEP.}~~$f(\hr_1, \ldots, \hr_{k-1}) = \sum_{i=1}^{k-1} (\hr_i \log \hr_i - \hr_i)= \langle \hat{\vr}, \log(\hat{\vr}) \rangle - \|\hat{\vr}\|_1$. This strictly generalizes the Kullback–Leibler Importance Estimation Procedure (KLIEP) \citep{sugiyama2008direct} to the multi-distribution case. See Appendix~\ref{app:kliep} for more details.

    \item \textbf{Power.}~~$f(\hr_1, \ldots, \hr_{k-1}) = \sum_{i=1}^{k-1} \hr_i^{\alpha}= \|\hat{\vr}\|_{\alpha}^{\alpha}$, ~~for $\alpha > 1$. This strictly generalizes the robust DRE method in \citep{sugiyama2012density}, which recovers  Multi-KLIEP when $\alpha \to 1$ and Multi-LSIF when $\alpha = 2$. See Appendix~\ref{app:robust-dre} for more details.
    
    \item \textbf{Quadratic.}~~$f(\hr_1, \ldots, \hr_{k-1}) = \hat{\vr}^\top \mH \hat{\vr} + \vq^\top \hat{\vr}$, for any positive definite matrix $\mH \succ 0$.
    When $\mH$ is the identity matrix and $\vq = (-2, \ldots, -2)$, this is equivalent to Multi-LSIF.
    
    \item \textbf{LogSumExp.}~~$f(\hr_1, \ldots, \hr_{k-1}) = \alpha \log \left(\sum_{i=1}^{k-1} \exp(\hr_i / \alpha)\right)$ for $\alpha > 0$.
\end{itemize}
In principle, we can use any desired strictly convex function $f:\sR^{k-1}_+ \to \sR$ within the optimization problem in \eqref{eq:multi-class-dre}, implying the great potential of our unified framework for discovering novel objectives for multi-distribution DRE.
In terms of modeling flexibility, the curvature of different convex functions encode different inductive biases that may favor various downstream applications and we leave the design of more suitable convex functions for DRE as exciting future avenues.

\subsection{Methods Induced by Proper Scoring Rules Composite with $\Psi_\mathrm{dr}$}
From Section~\ref{sec:theory}, we know that any strictly proper loss $\ell: [k] \times \Delta_k \to \sR$ (or strictly proper scoring rule $S(i, \hat{\veta}) = -\ell(i, \hat{\veta})$) in conjunction with the link function $\Psi_\mathrm{dr}$ also induces valid losses for multi-distribution DRE:
\begin{equation}
    \min_{\hat{\vr}:\gX \to \sR^{k-1}_+} \mathbb{E}_{D(x,y)} [\ell(y, \heta(x))] = \mathbb{E}_{x \sim M, y \sim \veta(x)} [\ell(y, \Psi_\mathrm{dr}^{-1}(\hat{\vr}(x)))]
    \label{eq:proper-scoring-rule-dre}
\end{equation}
In this work, we consider using the following classic proper scoring rules \citep{gneiting2007strictly}, where $\heta$ is parametrized as $\Psi_{\mathrm{dr}}^{-1}(\hat{\vr})$ (i.e. $\hat{\eta}_i = \pi_i \hr_i / \sum_{j = 1}^k \pi_j \hr_j$):
\begin{itemize}
    \item \textbf{Logarithm score.} \citep{good1952rational} The loss is specified as $\ell(i, \heta) = -\log(\hat{\eta}_i)$, which also recovers the loss of multi-class logistic regression in Section~\ref{sec:examples-convex}.
    \item \textbf{Brier score.} \citep{brier1950verification} The loss is specified as
    $\ell(i, \heta) = - 2 \hat{\eta}_i + \sum_{j=1}^k \hat{\eta}_j^2 + 1$.
    \item \textbf{Logarithm pseudo-spherical score.} \citep{good1971comment,fujisawa2008robust} 
    The loss is specified as $\ell(i, \hat{\veta}) = - \log \left(\frac{\hat{\eta}_i^{\alpha - 1}}{(\sum_{j=1}^k \hat{\eta}_j^{\alpha})^{(\alpha - 1)/\alpha}}\right)$ for $\alpha > 1$.
\end{itemize}

\section{Experiments}

In this section, we verify the validity of our framework, as well as study and compare the various instantiations introduced in Section~\ref{sec:examples}, on a variety of tasks that all rely on accurate multi-distribution density ratio estimation. In particular, the tasks we consider include density ratio estimation among multiple multivariate Gaussian distributions, anomaly detection on CIFAR-10 \citep{krizhevsky2009learning}, multi-target MNIST Generation \citep{lecun1998gradient} and multi-distribution off-policy policy evaluation. We discuss the basic problem setups, evaluation metrics and experimental results in the following and we provide more experimental details for each task in Appendix~\ref{app:experiment-details}.

\textbf{Synthetic Data Experiments.}~~We first apply our methods to estimate density ratios among $k=5$ multivariate Gaussian distributions with different mean vectors and identity covariance matrix. We conducted experiments for various data dimensions ranging from 2 to 50. Since Gaussian distributions have tractable densities, we know the ground-truth density ratio functions and we calculate the mean absolute error (MAE) between all $k \choose 2$ true density ratios and the learned ones: 
\begin{align*}
\small
    \mathrm{MAE}(\vr, \hat{\vr}; M(x)) = \frac{2}{k(k-1)}\mathbb{E}_{M(x)}\left[\sum_{1 \leq i < j \leq k} \left|r_{ij}(x) - \hat{r}_{ij}(x)\right|\right]
\end{align*}
where density ratio between $p_i$ and $p_j$ is recovered by $\hat{r}_i / \hat{r}_j$ as discussed in Section~\ref{sec:multi-class-dre-method}. We summarize the results in Table~\ref{tab:gaussian}, from which we can see that multi-class logistic regression and Brier score composite with $\Psi_\mathrm{dr}$ show superior performance in this task.

\textbf{OOD Detection on CIFAR-10.}~~Suppose we have $k$ different distributions $p_1, \ldots, p_k$, where $p_k = \sum_{i \in [k-1]} \alpha_i p_i$, ($\sum_{i \in [k-1]} \alpha_i = 1$ and $\forall i, \alpha_i > 0$). For each distribution $p_i$ ($i \leq k-1$), samples from the mixture distribution $p_k$ contain both inlier samples and outlier samples. The goal of this task is to identify the inlier samples from the pool of mixture samples. In particular, we use the learned density ratio $\hat{r}_i$ as the score function to retrieve the inlier samples of $p_i$, since the true density ratio function $r_i = p_i / \sum_{j \in [k-1]} \alpha_j p_j$ tend to be larger for samples from $p_i$ and smaller for samples from other distributions. In this case, we calculate the average AUROC for each scoring function.

\begin{table}[t]
    \centering
    \caption{Mean absolute error for multi-distribution density ratio estimation among five multivariate Gaussian distributions. Results are averaged across three random seeds.}
    \label{tab:gaussian}
        \resizebox{\textwidth}{!}{\begin{tabular}{c|cccccccc}
        \toprule
        Method & $\mathrm{Dim}=2$ & $\mathrm{Dim}=5$ & $\mathrm{Dim}=10$ & $\mathrm{Dim}=20$ & $\mathrm{Dim}=30$ & $\mathrm{Dim}=40$ & $\mathrm{Dim}=50$ \\ 
        \midrule
        Random Init & $1.724 \pm 0.03$ & $1.723 \pm 0.008$ & $1.728 \pm 0.02$ & $1.765 \pm 0.017$ & $1.749 \pm 0.009$ & $1.753 \pm 0.002$ & $1.768 \pm 0.008$\\
        Multi-LR & $\bm{0.044} \pm 0.003$ & $\bm{0.048} \pm 0.005$ & $\bm{0.061} \pm 0.002$ & $\bm{0.07} \pm 0.001$ & $\bm{0.081} \pm 0.002$ & $\bm{0.089} \pm 0.001$ & $\bm{0.098} \pm 0.002$\\
        Multi-KLIEP & $0.051 \pm 0.002$ & $0.066 \pm 0.002$ & $0.074 \pm 0.0$ & $0.089 \pm 0.002$ & $0.105 \pm 0.005$ & $0.112 \pm 0.004$ & $0.123 \pm 0.003$\\
        Multi-LSIF & $0.073 \pm 0.006$ & $0.097 \pm 0.001$ & $0.109 \pm 0.005$ & $0.124 \pm 0.003$ & $0.144 \pm 0.004$ & $0.141 \pm 0.005$ & $0.158 \pm 0.004$\\
        Power & $0.054 \pm 0.003$ & $0.073 \pm 0.001$ & $0.081 \pm 0.004$ & $0.104 \pm 0.003$ & $0.117 \pm 0.003$ & $0.123 \pm 0.005$ & $0.135 \pm 0.004$\\
        Brier & $\bm{0.042} \pm 0.002$ & $\bm{0.056} \pm 0.003$ & $\bm{0.066} \pm 0.003$ & $\bm{0.081} \pm 0.002$ & $\bm{0.086} \pm 0.002$ & $\bm{0.094} \pm 0.002$ & $\bm{0.105} \pm 0.001$\\
        Spherical & $0.103 \pm 0.007$ & $0.106 \pm 0.006$ & $0.115 \pm 0.004$ & $0.121 \pm 0.005$ & $0.125 \pm 0.006$ & $0.132 \pm 0.003$ & $0.138 \pm 0.011$\\
        LogSumExp & $0.231 \pm 0.067$ & $0.198 \pm 0.034$ & $0.184 \pm 0.013$ & $0.179 \pm 0.014$ & $0.184 \pm 0.009$ & $0.192 \pm 0.01$ & $0.193 \pm 0.003$\\
        Quadratic & $0.148 \pm 0.033$ & $0.186 \pm 0.028$ & $0.218 \pm 0.011$ & $0.219 \pm 0.018$ & $0.226 \pm 0.018$ & $0.236 \pm 0.023$ & $0.254 \pm 0.014$\\
        \bottomrule
        \end{tabular}}
        \vspace{-5pt}
\end{table}

\begin{table}[t]
    \centering
    \caption{Results for CIFAR-10 OOD detection, MNIST multi-target generation and multi-distribution off-policy policy evaluation error based on learned density ratios. $\uparrow$ means higher is better and $\downarrow$ means lower is better. Results of top $3$ methods for each task are bold. Results are averaged across three random seeds.}
    \label{tab:cifar-ood}
        \resizebox{0.86\textwidth}{!}{\begin{tabular}{c|ccc}
        \toprule
        Method & CIFAR-10 OOD ($\uparrow$) & MNIST Generation ($\downarrow$) & Off-policy Evaluation ($\downarrow$) \\ 
        \midrule
        Random Init & $0.499 \pm 0.017$ & $1.598 \pm 0.063$ & $1377.68 \pm 379.76$\\
        Multi-LR & $\bm{0.854} \pm 0.009$ & $\bm{0.156} \pm 0.014$ & $62.43 \pm 12.87$ \\
        Multi-KLIEP & $0.828 \pm 0.005$ & $0.281 \pm 0.050$ & $110.89 \pm 35.33$ \\
        Multi-LSIF & $0.801 \pm 0.008$ & $0.274 \pm 0.027$ & $71.09 \pm 1.12$\\
        Power ($\alpha=1.5$) & $0.816 \pm 0.007$ & $0.224 \pm 0.036$ & $\bm{53.43} \pm 20.73$\\
        Brier & $\bm{0.849} \pm 0.010$ & $\bm{0.107} \pm 0.022$ & $71.21 \pm 17.34$\\
        Spherical ($\alpha=1.8$) & $\bm{0.853} \pm 0.010$ & $\bm{0.145} \pm 0.041$ & /\\
        LogSumExp ($\alpha=5$) & $0.782 \pm 0.012$ & / & $\bm{52.02} \pm 9.16$\\
        Quadratic & $0.804 \pm 0.009$ & / & $\bm{55.10} \pm 11.92$\\
        \bottomrule
        \end{tabular}}
        \vspace{-5pt}
\end{table}

\textbf{Multi-target MNIST Generation.}~~DRE can be used in the sampling-importance-resampling (SIR) paradigm \citep{liu1998sequential,doucet2000sequential}. Suppose we want to obtain samples from $p_1, \ldots, p_{k-1}$ while we have a large set of samples from $p_k$. For each $i \in [k-1]$, we can use the density ratio function $\hat{r}_i$ in conjunction with SIR to approximately sample from the target distribution $p_i$ (Algorithm 1 in \citep{grover2019bias}). For this task, we evaluate if the SIR samples for target distribution $p_i$ contains the correct proportion of classes/properties (10 digit numbers in MNIST) and we use $\frac{1}{k-1}\sum_{i=1}^{k-1} \sum_{j=1}^{10} |h_{ij} - \hat{h}_{ij}|$ as the evaluation metric, where $h_{ij}$ and $\hat{h}_{ij}$ denote the desired proportion and sampled proportion for property $j$ in each target-generation task $i$.

\textbf{Multi-distribution Off-policy Policy Evaluation.}~~Suppose we have $k$ different reinforcement learning policies $p_i(a|s)$, each inducing an occupancy measure \citep{syed2008apprenticeship} (i.e, state-action distribution) $\rho_i(s,a)$. Density ratios allow us to conduct off-policy policy evaluation, which estimates the expected return (sum of reward) of target policies $p_1, \ldots, p_{k-1}$ given trajectories sampled from a source policy $p_k$. In this case, we evaluate the following metric to assess the quality of the learned density ratios ($\tau = \{(s_t, a_t)\}_{t=1}^T$ denotes a sequence of state-action pairs):
\begin{align*}
\small
    \frac{1}{k-1}\sum_{i=1}^{k-1} \left|\mathbb{E}_{p_k(\tau)} \left[\sum_{t=1}^T \hat{r}_i(s_t, a_t) r(s_t, a_t)\right] - \mathbb{E}_{p_i(\tau)} \left[\sum_{t=1}^T r(s_t, a_t)\right]\right|
\end{align*}
We summarized the results for CIFAR-10 OOD detection, multi-target MNIST generation and multi-distribution off-policy policy evaluation in Table~\ref{tab:cifar-ood} (omitted results indicates the corresponding method performs worse than listed methods by a large margin on the specific task). We can see that methods induced by proper scoring rules such as multi-class logistic regression, Brier score and pseudo-spherical score tend to have the best performance on the first two tasks. And surprisingly, methods induced by some simple multivariate convex functions such as the LogSumExp and the quadratic function show excellent performance on the third task. These results demonstrate the advantage of our framework in the sense that it offers us extreme flexibility for designing new objectives for multi-distribution DRE that are more suitable for various downstream applications.

\section{Conclusion}
In this paper, we focus on the generalized problem of efficiently estimating density ratios among multiple distributions. We propose a general framework based on expected Bregmand divergence minimization, where each strictly convex function induces a proper loss for multi-distribution DRE. Furthermore, we rederive the theoretical equivalence between the losses of class probability estimation and density ratio estimation, which justifies the use of any strictly proper scoring rules for multi-distribution DRE. Finally, we demonstrated the effectiveness of our framework on various downstream tasks.

\bibliography{iclr2022_conference}
\bibliographystyle{iclr2022_conference}

\newpage
\appendix
\section{Proofs}
\subsection{Proofs for Section~\ref{sec:preliminary}}
\classificationBregman*
\begin{proof}
For completeness, we provide the proof here. First, we can check that $\uL(P): \Delta_k \to \sR$ is a concave function. Define $\mathbf{L}(P)$ to be the vector $\left(\ell(1, P), \ldots, \ell(k, P)\right)$. Then the entropy function can be represented as $\uL(P) = L(P,P) = \mathbb{E}_{y \sim P} [\ell(y, P)] = P^\top \mathbf{L}(P)$ and similarly $L(P,Q) = P^\top \mathbf{L}(Q)$. 
For $\lambda \in [0,1]$ and $P, Q \in \Delta_k$, we have:
\begin{align*}
    \uL(\lambda P + (1 - \lambda) Q) &= (\lambda P + (1 - \lambda) Q)^\top \mathbf{L}(\lambda P + (1 - \lambda) Q) \\
    &= \lambda P^\top \mathbf{L}(\lambda P + (1 - \lambda) Q) + (1-\lambda) Q^\top \mathbf{L}(\lambda P + (1 - \lambda) Q) \\
    &\geq \lambda P^\top \mathbf{L}(P) + (1 - \lambda) Q^\top \mathbf{L}(Q) = \lambda \uL(P) + (1 - \lambda) \uL(Q)
\end{align*}
where the inequality is because $\ell$ is proper.
Thus $\uL$ is a concave function.
Next we show that the excess risk is a Bregman divergence with convex function $-\uL$. First, observe that
\begin{align*}
    L(P,Q) = P^\top \rmL(Q) = Q^\top \rmL(Q) + (P-Q)^\top \rmL(Q)
\end{align*}
Because $\ell$ is proper, we have:
\begin{align*}
    0 \leq L(P,Q) - L(P,P) &= Q^\top \rmL(Q) + (P-Q)^\top \rmL(Q) - P^\top \rmL(P) \\
    &= - \uL(P) - (- \uL(Q)) - (P - Q)^\top (- \rmL(Q))
\end{align*}

Rearrange the term we get $- \uL(P) \geq (- \uL(Q)) + (- \rmL(Q))^\top (P - Q)$ and therefore $- \rmL(Q)$ is a subderivative of $- \uL$. When $-\uL$ is differentiable, 
its subdifferential contains exactly one subderivative and $- \rmL(Q) = \nabla(- \uL(Q))$. Therefore, we have $\mathrm{reg}(P, Q) = L(P,Q) - L(P,P) = f(P) - f(Q) - \langle \nabla f(Q), P-Q \rangle = \breg_f(P, Q)$ with $f = - \uL$.
\end{proof}

\subsection{Proofs for Section~\ref{sec:multi-class-dre-method}}
\restateEquivalence*

\begin{proof}[Proof of Proposition~\ref{lem:equivalence-opt}]

We first recall that the optimization problem for multi-distribution DRE is of the form
\begin{align}
    \min_{\hat{\vr}:\gX \to \sR^{k-1}_+} 
    \mathbb{E}_{p_k(x)}\left[\langle \nabla f(\hat{\vr}(x)), \hat{\vr}(x) \rangle - f(\hat{\vr}(x))\right] - \sum_{i \in [k-1]} \mathbb{E}_{p_i(x)} [\partial_i f(\hat{\vr}(x))]
    \label{eq-app:multi-class-dre}
\end{align}
and one can use the Fenchel-dual convex conjugate to represent the $f$-divergence as 
\begin{equation}
    \divg_f(P_1,\cdots, P_{k-1}||P_k)= -\min_{\vs:\gX\rightarrow \R^{k-1}}\left[-\sum_{i\in[k-1]}\E_{p_i(x)}[\vs(x)]_i + \E_{p_k(x)}f^*(\vs(x))\right]
    \label{eq-app:obj-fdivg}
\end{equation}
By first-order optimality condition of convex functions, for any $x\in\gX$ the optimal solution $\overline{\vs}(x)$ for \eqref{eq-app:obj-fdivg} satisfies that
\begin{equation*}
    \forall i \in [k-1], x \in \gX,~~ p_i(x)-\partial_i f^*(\overline{\vs}(x))p_k(x)=0\Longleftrightarrow \frac{p_i(x)}{p_k(x)} = \partial_i f^*(\overline{\vs}(x))
\end{equation*}
Therefore $\overline{\vr}(x) = \nabla f^*(\overline{\vs}(x))$ recovers the true density ratios. 

Now we show that under change of variable $\vs(x) = \nabla f(\hat{\vr}(x))$, one can write the problem in \eqref{eq-app:obj-fdivg} equivalently as the one in \eqref{eq-app:multi-class-dre}. First due to the property of the convex conjugate function ($f^{**} = f$), we have:
$$f^*(\vs(x)) = \min_{\vh(x) \in \sR^{k-1}} \langle \vs(x), \vh(x) \rangle - f(\vh(x))$$
Substituting $\vs(x)$ with $\nabla f(\hat{\vr}(x))$, we have:
\begin{equation}
    f^*(\vs(x)) = \min_{\vh(x) \in \sR^{k-1}} \langle \nabla f(\hat{\vr}(x)), \vh(x) \rangle - f(\vh(x)) \label{eq-app:fenchel-dual}
\end{equation}

Taking derivative w.r.t. $\vh$ and due to the strict convexity of $f$ ($\nabla f(\va) = \nabla f(\vb) \Leftrightarrow \va = \vb$), we know that the minimum of \eqref{eq-app:fenchel-dual} achieves at $\overline{\vh}(x) = \hat{\vr}(x)$. Thus we have:
\begin{equation}
    f^*(\vs(x)) = \langle \nabla f(\hat{\vr}(x)), \hat{\vr}(x) \rangle - f(\hat{\vr}(x)) \label{eq-app:fenchel-dual-optimum}
\end{equation}

Plugging \eqref{eq-app:fenchel-dual-optimum} and $\vs(x) = \nabla f(\hat{\vr}(x))$ back to the optimization problem in \eqref{eq-app:obj-fdivg}, we can get the following equivalent problem by flipping a sign of the objective function without changing the optimal solution:
\[
\min_{\hat{\vr}: \gX \to \sR^{k-1}} \E_{p_k(x)}\left[\langle \nabla f(\hat{\vr}(x)), \hat{\vr}(x)\rangle - f(\hat{\vr}(x)) \right] - \sum_{i\in[k-1]}\E_{p_i(x)} \partial_i f(\hat{\vr}(x)),
\]
which is the same as the one in ~(\ref{eq-app:multi-class-dre}).
\end{proof}

\subsection{Proofs for Section~\ref{sec:theory}}\label{app:proof-sec-4}
\bregidentity*
\begin{proof}[Proof of Lemma~\ref{lem:breg-identity}]
For simplicity of notations we let $u_k=v_k=1$ for arbitrary $u,v\in\R^{k-1}$. We first prove the convexity of $\fopt$ by definition. Given any two points $u,v\in\R^{k-1}$ and $\theta\in[0,1]$, one has
\begin{align*}
& \fopt(\theta \vu+ (1-\theta )\vv)\\
= & \left(\sum_{i\in[k]}\left(\theta u_i+(1-\theta)v_i\right)\right)\cdot f\left(\frac{1}{\sum_{i\in[k]}(\theta u_i+(1-\theta)v_i)}\cdot (\theta \vu+(1-\theta)\vv)\right)\\
= & \left(\theta\sum_{i\in[k]}u_i+(1-\theta)\sum_{i\in[k]}v_i\right)\cdot f\left(\frac{1}{\theta\sum_{i\in[k]}u_i+(1-\theta)\sum_{i\in[k]}v_i}\cdot (\theta \vu+(1-\theta)\vv)\right)\\
\stackrel{(\star)}{\le} & \theta \left(\sum_{i\in[k]}u_i\right)f\left(\frac{1}{\sum_{i\in[k]}u_i}\vu\right)+(1-\theta) \left(\sum_{i\in[k]}v_i\right)f\left(\frac{1}{\sum_{i\in[k]}v_i}\vv\right)\\
= & \theta\fopt(\vu)+(1-\theta)\fopt(\vv).
\end{align*}
Here for inequality~$(\star)$ we use the fact that for any convex function $g:\R^n\rightarrow \R$, the perspective function $h(t,x):\R\times \R^n\rightarrow \R$ defined as $h(t,x)\defeq tg(x/t)$ is a function jointly convex in $(t,x)$.

Now to see the identity holds, note we can write
\begin{align*}
\LHS = & f\left(\frac{1}{\sum_{i\in[k]}u_i}\cdot \vu\right)-f\left(\frac{1}{\sum_{i\in[k]}v_i}\cdot \vv\right)\\
& - \left\langle\nabla f\left(\frac{1}{\sum_{i\in[k]}v_i}\cdot \vv\right),\frac{1}{\sum_{i\in[k]}u_i}\cdot \vu - \frac{1}{\sum_{i\in[k]}v_i}\cdot \vv\right\rangle
\end{align*}
and that
\begin{align*}
\RHS = & f\left(\frac{1}{\sum_{i\in[k]}u_i}\cdot \vu\right)-\frac{\sum_{i\in[k]}v_i}{\sum_{i\in[k]}u_i}\cdot f\left(\frac{1}{\sum_{i\in[k]}v_i}\cdot \vv\right)-\frac{1}{\sum_{i\in[k]}u_i}\left\langle \nabla \fopt\left(\vv\right),\vu-\vv \right\rangle\\
\stackrel{(i)}{=} & f\left(\frac{1}{\sum_{i\in[k]}u_i}\cdot \vu\right)-\frac{\sum_{i\in[k]}v_i}{\sum_{i\in[k]}u_i}\cdot f\left(\frac{1}{\sum_{i\in[k]}v_i}\cdot \vv\right)\\
& -\frac{1}{\sum_{i\in[k]}u_i}\left\langle f\left(\frac{1}{\sum_{i\in[k]}v_i}\cdot \vv\right)\1+\left(\I-\frac{1}{\sum_{i\in[k]}v_i}\1 \vv^\top\right)\nabla f\left(\frac{1}{\sum_{i\in[k]}v_i}\cdot \vv\right),\vu-\vv \right\rangle\\
\stackrel{(ii)}{=} & f\left(\frac{1}{\sum_{i\in[k]}u_i}\cdot \vu\right)-\left[\frac{\sum_{i\in[k]}v_i}{\sum_{i\in[k]}v_i}+\frac{\sum_{i\in[k]}(u_i-v_i)}{\sum_{i\in[k]}u_i}\right]\cdot f\left(\frac{1}{\sum_{i\in[k]}v_i}\cdot \vv\right)\\
& + \frac{1}{\sum_{i\in[k]}u_i}\left\langle \nabla f\left(\frac{1}{\sum_{i\in[k]}v_i}\cdot \vv\right), \left(\I-\frac{1}{\sum_{i\in[k]}v_i}\vv\1^\top\right)(\vu-\vv)\right\rangle\\
= & f\left(\frac{1}{\sum_{i\in[k]}u_i}\cdot \vu\right)- f\left(\frac{1}{\sum_{i\in[k]}v_i}\cdot \vv\right)\\
& + \left\langle \nabla f\left(\frac{1}{\sum_{i\in[k]}v_i}\cdot \vv\right), \frac{1}{\sum_{i\in[k]}u_i}\cdot \vu-\frac{1}{\sum_{i\in[k]}u_i}\cdot \vv-\frac{\sum_{i\in[k]}(u_i-v_i)}{\left(\sum_{i\in[k]}u_i\right)\left(\sum_{i\in[k]}v_i\right)}\cdot \vv\right\rangle\\
= & f\left(\frac{1}{\sum_{i\in[k]}u_i}\cdot \vu\right)- f\left(\frac{1}{\sum_{i\in[k]}v_i}\cdot \vv\right)\\
& + \left\langle \nabla f\left(\frac{1}{\sum_{i\in[k]}v_i}\cdot \vv\right), \frac{1}{\sum_{i\in[k]}u_i}\cdot \vu-\frac{1}{\sum_{i\in[k]}v_i}\cdot \vv\right\rangle,
\end{align*}
where we use $(i)$ the gradient formula that $\nabla \fopt(\vv) = \left(\I-\frac{1}{\sum_{i\in[k]}v_i}\1 \vv^\top\right)\nabla f\left(\frac{1}{\sum_{i\in[k]}v_i}\cdot \vv\right)$ by definition of $\fopt$, and $(ii)$ rearranging terms and that $\langle \mA\vv,\vu\rangle = \langle \vu, \mA^\top \vv\rangle$.

Thus, we have shown that $\LHS=\RHS$ and concludes the proof.
\end{proof}

\corBregIdentity*

\begin{proof}[Proof of Proposition~\ref{cor:breg-identity}]
Given any $x\in\gX$, the equality follows by  applying Lemma~\ref{lem:breg-identity} with $\vu = \frac{1}{\pi_k}\vpi_{[1:k-1]}\circ\vr(x)$ and $\vv = \frac{1}{
\pi_k}\vpi_{[1:k-1]}\circ\hat{\vr}(x)$. To see why this is true, note that we have by definition of $\eta_i(x)$ and $\hat{\eta}_i(x)$ that (here $\circ$ implies element-wise multiplication) 
\[
\veta(x) = \frac{\vpi\circ \vr(x)}{\pi_k+\sum_{j\in[k-1]}\pi_j r_j(x)} = \frac{1}{1+\sum_{i\in[k-1]}u_i}\cdot \vu,~\text{and similarly}~\hat{\veta}(\vx) = \frac{1}{1+\sum_{i\in[k-1]}v_i}\cdot \vv.
\]
Consequently applying Lemma~\ref{lem:breg-identity} implies that
\begin{equation}\label{eq:breg-identity-first}
\breg_f(\veta(x),\hat{\veta}(x)) = \frac{1}{1+\sum_{i\in[k-1]}u_i}\breg_{\fopt}(\vu,\vv)
\end{equation}
Note that given any convex function $\fopt$,  we consider its composition with linear map function as 
\[\fopt_\pi(\vr) = \fopt\left(\frac{1}{\pi_k}\vpi_{[1:k-1]}\circ \vr\right) = \fopt(\vu).\]
We note that linear composition preserves convexity and Bregman divergence equality, i.e. we have
\begin{equation}\label{eq:breg-identity-second}
\begin{aligned}
& \breg_{\fopt}(\vu,\vv)  = \fopt(\vu)-\fopt(\vv) - \langle \nabla \fopt(\vv),\vu-\vv \rangle\\
= & \fopt\left(\frac{1}{\pi_k}\vpi_{1:k-1}\circ\vr\right)-\fopt\left(\frac{1}{\pi_k}\vpi_{1:k-1}\circ\hat{\vr}\right) -\left \langle \nabla \fopt\left(\frac{1}{\pi_k}\vpi_{1:k-1}\circ\hat{\vr}\right),\frac{1}{\pi_k}\vpi_{1:k-1}\circ(\vr-
\hat{\vr}) \right\rangle\\
\stackrel{(\star)}{=} & \fopt_{\pi}(\vr) - \fopt_{\pi}(\hat{\vr})-\langle \nabla \fopt_\pi(\vr),\vr-\hat{\vr} \rangle = \breg_{\fopt_{\pi}}(\vr,\hat{\vr}),
\end{aligned}
\end{equation}
where for equality $(\star)$ we use chain rule for taking derivatives of the linear composite mapping. Combining Equations~(\ref{eq:breg-identity-first}) and~(\ref{eq:breg-identity-second}) and replacing $\vu = \frac{1}{\pi_k}\vpi_{[1:k-1]}\circ\vr$ gives the desired result.
\end{proof}

\thmBregmanDRE*
\begin{proof}[Proof of Theorem~\ref{thm:bregman-dre}]
Given the multi-class classification regret under some proper loss $\ell$ in~\eqref{eq:breg-represent-regret} and Proposition~\ref{cor:breg-identity} we have:
\begin{align*}
\reg(\heta; M, \veta, \ell)  & \defeq \gL_\text{CPE}(\heta; D) - \gL_\text{CPE}(\veta; D) = \mathbb{E}_{M(x)} [\breg_f(\veta(x), \heta(x))]\\
& \stackrel{(i)}{=} \sum_{i\in[k]}\pi_i \E_{p_i(x)}\breg_f(\veta(x),\hat{\veta}(x)) = \E_{p_k(x)}\left( \sum_{i\in[k]}\pi_i \frac{p_i(x)}{p_k(x)}\breg_f(\veta(x),\hat{\veta}(x))\right)\\
& \stackrel{(ii)}{=}  \E_{p_k(x)}\left(\left(\pi_k +\sum_{i\in[k-1]}\pi_i r_i(x)\right)\cdot\breg_f(\veta(x),\hat{\veta}(x))\right)\\
& \stackrel{(iii)}{=} \pi_k \E_{p_k(x)}\breg_{\fopt_\pi}(\vr(x),\hat{\vr}(x)),
\end{align*}
where we use $(i)$ the definition of marginal distribution $M(x) = \sum_{i\in[k]}\pi_i p_i(x)$, $(ii)$ the definition of density ratio that $r_i(x) = p_i(x)/p_k(x)$ $\forall x\in\gX, i \in [k]$, and $(iii)$ Proposition~\ref{cor:breg-identity} with the consistent definitions of $\fopt_\pi$ and $\vr$, $\hat{\vr}$ as stated in the theorem.
\end{proof}

\newcommand{\vset}{\mathcal{V}}

\subsubsection{Information Measure in Multi-class Experiments}\label{app:proof-sec-4-info}
In this section, we show that multi-distribution density ratio estimation can be viewed as estimating the statistical information measure \citep{degroot1962uncertainty} in multi-class experiments, under appropriate choices for the convex function $f$.

We first introduce the following definitions in multi-class experiments.
For $\vp \in \Delta_k$, any proper loss function $\ell:[k]\times\Delta^k\to \R$ induces a generalized entropy:
$$H_\ell(\vp) \defeq \inf_{\vq \in \Delta^k}\sum_{i\in[k]}p_i \ell(i,\vq),$$
which measures the uncertainty of the task.
Given a multi-class experiment $D = (\vpi,P_1,\ldots, P_k)=(M,\veta)$ and the generalized entropy $H_\ell:\Delta^k\to \R$ (which is closed concave), the information measure in a multi-class experiment \citep{degroot1962uncertainty,duchi2018multiclass} is defined as the gap between the prior and posterior generalized entropy:
$$\Ical_{H_\ell}(D) = H_\ell(\vpi) - \E_{M(x)}[H_\ell(\veta(x))].$$
We next introduce the following connections between multi-distribution $f$-divergence, generalized entropy and information measure in multi-class experiments. Specifically, \citet{duchi2018multiclass} proved an equivalence between the gap of prior and posterior Bayes risks and the multi-distribution $f$-divergence induced by a convex function $f$ depending on $\ell$ and the prior $\vpi$, demonstrating the utility of multi-distribution $f$-divergence for experimental design of multi-class classification. 

\begin{theorem}[\citep{duchi2018multiclass}]\label{thm:inform}
Given a proper loss $\ell$, its associated generalized entropy $H_\ell$ and a multi-class distribution $D=(\vpi,P_1,\ldots,P_k)=(M, \veta)$, we can define a closed convex function $f_{\ell, \vpi}: \sR^{k-1}_+ \to \sR \cup \{\pm \infty\}$ as
\begin{equation}
    f_{\ell,\vpi}(\vt)\defeq\sup_{\bm{\nu}\in\Delta^k}\left(H_\ell(\vpi)-\sum_{i\in[k-1]}\pi_i\ell(i,\bm{\nu})t_i-\pi_k \ell(k,\bm{\nu})\right)\label{eq:f-from-H-ell}
\end{equation}
We can then express the information measure of multi-class experiments as the multi-distribution $f$-divergence induced by \eqref{eq:f-from-H-ell}:
\begin{align*}
    \Ical_{H_\ell}(D) &= H(\vpi) - \E_{M(x)} [H_\ell(\veta(x))] = \inf_{\bm{\nu}\in\Delta^k}\sum_{i\in[k]}\pi_i \ell(i,\bm{\nu}) - \inf_{\hat{\veta}}\gL(\hat{\veta};D) \\
    &= \divg_{f_{\ell,\vpi}}(P_{1},\ldots,P_{k-1}
    \|P_k).
\end{align*}
\end{theorem}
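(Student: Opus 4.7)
The plan is to verify the theorem's claims in two parts: that $f_{\ell,\vpi}$ is closed convex, and that the claimed chain of equalities for $\Ical_{H_\ell}(D)$ holds. The convexity claim is immediate: $f_{\ell,\vpi}(\vt)$ is a pointwise supremum over $\bm{\nu}\in\Delta^k$ of functions that are affine in $\vt$, hence closed and convex in $\vt$ without further argument.

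The middle equality rewrites $H_\ell(\vpi)-\E_{M(x)}[H_\ell(\veta(x))]$ as $\inf_{\bm{\nu}\in\Delta^k}\sum_{i\in[k]}\pi_i\ell(i,\bm{\nu})-\inf_{\heta}\gL(\heta;D)$. The first term matches by definition of $H_\ell$. The second follows because $\gL(\heta;D)=\E_{M(x)}\sum_i\eta_i(x)\ell(i,\heta(x))$ admits pointwise minimization in $\heta(x)\in\Delta^k$, so by properness $\inf_{\heta}\gL(\heta;D)=\E_{M(x)}[H_\ell(\veta(x))]$.

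The main work lies in the third equality, which identifies the information measure with the multi-distribution $f$-divergence. Starting from $\divg_{f_{\ell,\vpi}}(P_1,\ldots,P_{k-1}\|P_k)=\E_{p_k(x)}[f_{\ell,\vpi}(\vr(x))]$, I would substitute the definition of $f_{\ell,\vpi}$ and pull $p_k(x)$ inside the supremum to obtain
\[
p_k(x)\,f_{\ell,\vpi}(\vr(x))=\sup_{\bm{\nu}\in\Delta^k}\Big(p_k(x)H_\ell(\vpi)-\sum_{i\in[k-1]}\pi_i p_i(x)\ell(i,\bm{\nu})-\pi_k p_k(x)\ell(k,\bm{\nu})\Big),
\]
using $p_k(x)\,r_i(x)=p_i(x)$ to uniformize the coefficients across $i\in[k]$. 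Integrating in $x$ and interchanging supremum with integral via a measurable selector $\bm{\nu}^*(x)$, the first term integrates to $H_\ell(\vpi)$ (since $\int p_k(x)\,\mathrm{d}x=1$) and the remaining terms collapse to
\[
-\inf_{\bm{\nu}(\cdot)}\int \sum_{i\in[k]}\pi_i p_i(x)\,\ell(i,\bm{\nu}(x))\,\mathrm{d}x = -\int M(x)H_\ell(\veta(x))\,\mathrm{d}x,
\]
where the last step uses Bayes' rule $\eta_i(x)=\pi_ip_i(x)/M(x)$ together with $M(x)=\sum_i\pi_ip_i(x)$ and pointwise optimization in $\bm{\nu}(x)$.

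The main technical obstacle is justifying the interchange of supremum and integral, i.e., asserting that the pointwise optimum $\bm{\nu}^*(x)$ can be realized by a measurable selector so that the supremum-of-integrals equals the integral-of-suprema. Under standard regularity on $\ell$ (e.g., continuity in its second argument on the compact simplex $\Delta^k$), this follows from classical measurable selection theorems. Everything else amounts to algebraic bookkeeping via Bayes' rule.
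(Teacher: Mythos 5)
Your argument is correct, and the comparison here is somewhat one-sided: the paper states this theorem as an import from \citep{duchi2018multiclass} and gives no proof of its own, so your write-up is effectively supplying the argument the paper omits rather than paralleling one. All three links in your chain check out: closedness and convexity of $f_{\ell,\vpi}$ as a pointwise supremum of functions affine in $\vt$; the identification $H_\ell(\vpi)=\inf_{\bm{\nu}\in\Delta^k}\sum_{i\in[k]}\pi_i\ell(i,\bm{\nu})$ and $\inf_{\heta}\gL(\heta;D)=\E_{M(x)}[H_\ell(\veta(x))]$ via properness; and the algebra converting $\E_{p_k(x)}[f_{\ell,\vpi}(\vr(x))]$ into $H_\ell(\vpi)-\E_{M(x)}[H_\ell(\veta(x))]$ using $p_k(x)r_i(x)=p_i(x)$ and $\pi_i p_i(x)=M(x)\eta_i(x)$. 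The one place you overcomplicate matters is the step you call the ``main technical obstacle'': no measurable selection theorem is needed. For the third equality the multi-distribution $f$-divergence is \emph{by definition} the integral of the pointwise supremum, and that supremum evaluates in closed form, since $\sum_{i\in[k-1]}\pi_i\ell(i,\bm{\nu})r_i(x)+\pi_k\ell(k,\bm{\nu})=\frac{M(x)}{p_k(x)}\sum_{i\in[k]}\eta_i(x)\ell(i,\bm{\nu})$, whose infimum over $\bm{\nu}$ is $\frac{M(x)}{p_k(x)}H_\ell(\veta(x))$; you then simply integrate $p_k(x)\bigl(H_\ell(\vpi)-\frac{M(x)}{p_k(x)}H_\ell(\veta(x))\bigr)$, with no interchange of supremum and integral. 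For the second equality, properness of $\ell$ already exhibits the pointwise minimizer explicitly as $\bm{\nu}^*(x)=\veta(x)$, which is measurable by construction, so the ``standard regularity'' and selection-theorem caveats can be dropped entirely.
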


Given Theorem~\ref{thm:inform} and Proposition~\ref{lem:equivalence-opt}, we know that multi-distribution density ratio estimation by minimizing expected Bregman divergence (\eqref{eq:multi-class-dre}), induced by the convex function $f_{\ell,\vpi}$ defined in \eqref{eq:f-from-H-ell}, corresponds to estimating the statistical information measure in multi-class classification experiments.

\subsection{Examples of Multi-distribution DRE}\label{app:examples-derivations}
\subsubsection{Multi-class Logistic Regression}\label{app:multi-class-lr-derivations}

From Section~\ref{sec:multi-class-link-function}, we know that there is a one-to-one correspondence between a class probability estimator and a density ratio estimator through the link and the inverse link function: $\hat{\vr} = \Psi_\mathrm{dr} \circ \hat{\veta}$ and $\hat{\veta} = \Psi_\mathrm{dr}^{-1} \circ \hat{\vr}$. When the class prior distribution $\vpi$ is uniform, we have:
\begin{equation}\label{eq-app:uniform-prior-link}
    \hr_i(x) = \frac{\hat{\eta}_i(x)}{\hat{\eta}_k(x)} ~~ \text{and} ~~ \hat{\eta}_i(x) = \frac{\hr_i(x)}{\sum_{j=1}^k \hr_j(x)},~~ \text{for all}~i \in [k], x \in \gX.
\end{equation}
To recover the loss of multi-class logistic regression used in \citep{bickel2008multi}, we choose the following convex function (where we use $\hr_k=1$):
\begin{align}
    f(\hr_1, \ldots, \hr_{k-1}) =& \frac{1}{k} \sum_{i=1}^{k} \hr_i \log\left(\frac{\hr_i}{\sum_{j=1}^k \hr_j}\right) \label{eq-app:multi-class-lr-uniform-f} \\
    \partial_i f(\hr_1, \ldots, \hr_{k-1}) =& \frac{1}{k} \log \left(\frac{\hr_i}{\sum_{j=1}^k \hr_j}\right) \quad\text{for}~i\in[k-1]
\end{align}
Thus the loss in \eqref{eq:multi-class-dre} reduces to:
\begin{align*}
    &\frac{1}{k} \mathbb{E}_{p_k(x)} \left[\sum_{i=1}^{k-1} \hr_i(x) \log \frac{\hr_i(x)}{\sum_{j=1}^k \hr_j(x)} - \sum_{i=1}^{k-1} \hr_i(x) \log \hr_i(x) + \left(\sum_{i=1}^k \hr_i(x)\right) \log \left(\sum_{i=1}^k \hr_i(x)\right)\right] - \\
    &\frac{1}{k} \sum_{i=1}^{k-1} \mathbb{E}_{p_i(x)} \left[\log\left(\frac{\hr_i(x)}{\sum_{j=1}^k \hr_j(x)}\right)\right] \\
    = &\frac{1}{k} \mathbb{E}_{p_k(x)} \left[\hr_k(x) \log \left(\sum_{i=1}^k \hr_i(x)\right) \right] - \frac{1}{k}\sum_{i=1}^{k-1} \mathbb{E}_{p_i(x)} \left[\log\left(\frac{\hr_i(x)}{\sum_{j=1}^k \hr_j(x)}\right)\right] \\
    \stackrel{(i)}{=} & - \left(\frac{1}{k}\sum_{i=1}^k \mathbb{E}_{p_i(x)} [\log(\heta_i(x))] \right)
\end{align*}
where $(i)$ is because $\hr_k(x) = 1,~\forall x \in \gX$ and \eqref{eq-app:uniform-prior-link}.

When the class prior $\vpi$ is not uniform, from Section~\ref{sec:multi-class-link-function}, we know that the link and inverse link connecting density ratio estimators and class probability estimators are:
\begin{equation}
    \hr_i = \frac{\pi_k}{\pi_i}\cdot \frac{\hat{\eta}_i}{\hat{\eta}_k}
    \quad \text{and} \quad
    \hat{\eta}_i = \frac{\pi_i\hr_i}{\sum_{j\in[k]}\pi_j\hr_j},~\text{for all}~i\in[k], x\in\gX.
\end{equation}
In this case, we choose the following convex function (where we use $\hr_k = 1$):
\begin{align}
    f(\hr_1, \ldots, \hr_{k-1}) =& \sum_{i=1}^{k} \pi_i\hr_i \log \pi_i\hr_i  - \left(\sum_{i=1}^{k} \pi_i\hr_i\right) \log\left(\sum_{i=1}^{k} \pi_i\hr_i\right) \label{eq-app:multi-class-lr-general-f} \\
    \partial_i f(\hr_1, \ldots, \hr_{k-1}) =& \pi_i \log \left(\frac{\pi_i\hr_i}{\sum_{j=1}^k \pi_i\hr_j}\right) \quad\text{for}~i\in[k-1]
\end{align}
Note that when $\vpi$ is uniform distribution, \eqref{eq-app:multi-class-lr-general-f} reduces to \eqref{eq-app:multi-class-lr-uniform-f}. 

The loss in \eqref{eq:multi-class-dre} reduces to:
\begin{align*}
    & \mathbb{E}_{p_k(x)} \left[\pi_k \hr_k(x) \log \left(\sum_{i=1}^k \pi_i \hr_i(x)\right) - \pi_k \hr_k(x) \log (\pi_k \hr_k(x))\right] - \sum_{i=1}^{k-1} \mathbb{E}_{p_i(x)} \left[\pi_i\log\left(\frac{\pi_i\hr_i(x)}{\sum_{j=1}^k \pi_j\hr_j(x)}\right)\right] \\
    =& -\left(\sum_{i=1}^k \pi_i \mathbb{E}_{p_i(x)}[\log(\hat{\eta}_i(x))]\right) 
\end{align*}
which corresponds to the multi-class logistic regression loss for the class probability estimators $\heta$.

\textbf{Remark.} Interestingly, we noticed that the multi-distribution $f$-divergence associated with the convex function in \eqref{eq-app:multi-class-lr-uniform-f} is the multi-distribution Jensen-Shannon divergence \citep{garcia2012divergences} (also known as information radius \citep{sibson1969information}) up to a constant of $\log k$:
\begin{align*}
    \divg_f(P_1, \ldots, P_k) &= \E_{p_k(x)} \left[f\left(\frac{p_1(x)}{p_k(x)},\ldots, \frac{p_{k-1}(x)}{p_k(x)}\right)\right]\\
    &= \frac{1}{k} \E_{p_k(x)} \left[ \sum_{i=1}^{k} \frac{p_i(x)}{p_k(x)}\log \left(\frac{p_i(x)}{\sum_{j=1}^k p_k(x)}\right) \right] \\
    &= \frac{1}{k} \sum_{i=1}^k \E_{p_i(x)} \left[\log \left(\frac{p_i(x)}{\frac{1}{k}\sum_{j=1}^k p_j(x)}\right)\right] - \log k\\
    &= \frac{1}{k}\sum_{i=1}^k \KL\left(P_i \| \frac{1}{k}\sum_{j=1}^k P_j\right) - \log k
\end{align*}

\subsubsection{Least-squares Importance Fitting}\label{sec-app:lsif}
When the convex function associated with the Bregman divergence is chosen to be:
\begin{align}
    f(\hr_1, \ldots, \hr_{k-1}) &= \frac{1}{2} \sum_{i=1}^{k-1} (\hr_i - 1)^2 = \frac{1}{2} \|\hat{\vr} - \bm{1}\|^2 \\
    \partial_i f(\hr_1, \ldots, \hr_{k-1}) &= \hr_i - 1 \quad\text{for}~i\in[k-1]
\end{align}
The loss in \eqref{eq:multi-class-dre} reduces to:
\begin{align*}
    & \mathbb{E}_{p_k(x)} \left[\sum_{i=1}^{k-1} (\hr_i^2(x) - \hr_i(x)) - \frac{1}{2} \sum_{i=1}^{k-1}(\hr_i^2(x) - 2 \hr_i(x) + 1) \right] - \sum_{i=1}^{k-1} \mathbb{E}_{p_i(x)} \left[\hr_i(x) - 1\right]\\
    =& \frac{1}{2} \mathbb{E}_{p_k(x)} \left[\sum_{i=1}^{k-1} (\hr_i ^2(x) - 1)\right] - \sum_{i=1}^{k-1} \mathbb{E}_{p_i(x)} \left[\hr_i(x) - 1\right]\\
    =& \frac{1}{2} \sum_{i=1}^{k-1} \mathbb{E}_{p_k(x)} \left[\hr_i^2(x) - 1 - 2 \frac{p_i(x)}{p_k(x)}(\hr_i(x) - 1)\right]\\
    =& \frac{1}{2} \sum_{i=1}^{k-1} \mathbb{E}_{p_k(x)} \left[(\hr_i(x) - r_i(x))^2\right] - C
\end{align*}
where $C = \mathbb{E}_{p_k(x)} \left[\|\vr(x) - 1\|^2\right]$ is a constant w.r.t. $\hat{\vr}$ and the minimizer to the above loss function matches the true density ratios, which strictly generalizes the Least-Squares Importance Fitting (LSIF) \citep{kanamori2009least} method to the multi-distribution case.

\subsubsection{KL Importance Estimation Procedure}\label{app:kliep}
When the convex function associated with the Bregman divergence is chosen to be:
\begin{align}
    f(\hr_1, \ldots, \hr_{k-1}) &= \sum_{i=1}^{k-1} (\hr_i \log \hr_i - \hr_i)= \langle \hat{\vr}, \log(\hat{\vr}) \rangle - \|\hat{\vr}\|_1\\
    \partial_i f(\hr_1, \ldots, \hr_{k-1}) &= \log \hr_i \quad\text{for}~i\in[k-1]
\end{align}
The loss in \eqref{eq:multi-class-dre} reduces to:
\begin{align}
    & \mathbb{E}_{p_k(x)}\left[\sum_{i=1}^{k-1} \hr_i(x) \log \hr_i(x) - \sum_{i=1}^{k-1} (\hr_i(x) \log \hr_i(x) - \hr_i(x))\right] - \sum_{i=1}^{k-1} \mathbb{E}_{p_i(x)}[\log \hr_i(x)]\nonumber\\
    =& \mathbb{E}_{p_k(x)}\left[\sum_{i=1}^{k-1} \hr_i(x)\right] - \sum_{i=1}^{k-1} \mathbb{E}_{p_i(x)}[\log \hr_i(x)]\label{eq-app:KLIEP}
\end{align}
This is equivalent to the following constrained optimization problem:
\begin{align*}
    &\argmin_{\hat{\vr}:\gX \to \sR^{k-1}} \sum_{i=1}^{k-1} \KL(p_i(x) \| \hr_i(x)\cdot p_k(x)) = \sum_{i=1}^{k-1} \mathbb{E}_{p_i(x)}\left[\log \left( \frac{p_i(x)}{\hr_i(x) \cdot p_k(x)}\right)\right]\\
    =& \argmin_{\hat{\vr}:\gX \to \sR^{k-1}} - \sum_{i=1}^{k-1} \mathbb{E}_{p_i(x)} [\log \hr_i(x)]\\
    &\text{s.t.}\quad \mathbb{E}_{p_k(x)}[\hr_i(x)] = 1 ~~\text{and}~~ \hr_i(x) \geq 0,  ~~\text{for all}~i\in[k-1].
\end{align*}
which strictly generalizes the Kullback–Leibler Importance Estimation Procedure (KLIEP) \citep{sugiyama2008direct} to the multi-distribution case.

\subsubsection{Basu's Power Divergence for Robust DRE}\label{app:robust-dre}
For some $\alpha > 1$, we choose the following convex function (the $\alpha$-norm of a vector):
\begin{align}
    f(\hr_1, \ldots, \hr_{k-1}) &= \sum_{i=1}^{k-1} \hr_i^{\alpha} = \|\hat{\vr}\|_{\alpha}^{\alpha}\\
    \partial_i f(\hr_1, \ldots, \hr_{k-1}) &= \alpha \hr_i^{\alpha-1}
\end{align}

The loss in \eqref{eq:multi-class-dre} reduces to:
\begin{align}
    & \mathbb{E}_{p_k(x)}\left[\sum_{i=1}^{k-1} \alpha \hr_i^{\alpha}(x) - \sum_{i=1}^{k-1} \hr_i^{\alpha}(x) \right] - \sum_{i=1}^{k-1} \mathbb{E}_{p_i(x)}\left[ \alpha \hr_i^{\alpha-1}(x) \right]\nonumber\\
    =& \sum_{i=1}^{k-1} \mathbb{E}_{p_k(x)}\left[ (\alpha - 1)\hr_i^{\alpha}(x)\right] - \sum_{i=1}^{k-1} \mathbb{E}_{p_i(x)}\left[ \alpha \hr_i^{\alpha-1}(x) \right]\label{eq-app:robust-loss}
\end{align}

To understand the robustness of this formulation,
for each $i \in [k-1]$, we take the derivative of \eqref{eq-app:robust-loss} w.r.t. the parameters in the density ratio model $\hr_i$ and equate it to zero, and we get the following parameter estimation equation:
\begin{equation}
    \mathbb{E}_{p_k(x)} [\hr_i^{\alpha-1}(x) \nabla \hr_i(x)] - \mathbb{E}_{p_i(x)}[\hr_i^{\alpha - 2}(x) \nabla \hr_i(x)] = \bm{0}\label{eq-app:robust-loss-gradient}
\end{equation}

Now we apply the same analysis to the multi-distribution KLIEP method in \eqref{eq-app:KLIEP} and we get the following equation (for each $i \in [k-1]$):
\begin{equation}
    \mathbb{E}_{p_k(x)}[\nabla \hr_i(x)] - \mathbb{E}_{p_i(x)}[\hr_i^{-1}(x)\nabla \hr_i(x)] = \bm{0}
    \label{eq-app:KLIEP-gradient}
\end{equation}

Comparing \eqref{eq-app:robust-loss-gradient} with \eqref{eq-app:KLIEP-gradient}, we can see that the power divergence DRE method in \eqref{eq-app:robust-loss} is a weighted version of the multi-distribution KLIEP method, where the weight $\hr_i^{\alpha - 1}(x)$ controls the importance of the samples. In some scenario where the outlier samples tend to have density ratios less than one, they will have less influence on the parameter estimation, which generalizes the binary Basu's power divergence robust DRE method \citep{sugiyama2012density} to the multi-distribution case. Another interesting observation is that when $\alpha \to 1$, \eqref{eq-app:robust-loss-gradient} recovers the KLIEP gradient in \eqref{eq-app:KLIEP-gradient}; when $\alpha = 2$, the power divergence DRE in \eqref{eq-app:robust-loss} recovers the multi-distribution LSIF method in Section~\ref{sec-app:lsif}.

\subsubsection{More Examples}
When the convex function is chosen to be the Log-Sum-Exp type function (for $\alpha > 0$):
\begin{align}
    f(\hr_1, \ldots, \hr_{k-1}) &= \alpha \log \left(\sum_{i=1}^{k-1} \exp(\hr_i / \alpha)\right) \\
    \partial_i f(\hr_1, \ldots, \hr_{k-1}) &= \frac{\exp(\hr_i/\alpha)}{\sum_{i=1}^{k-1} \exp(\hr_i/\alpha)}
\end{align}
The loss in \eqref{eq:multi-class-dre} can be written as:
\begin{align}
    & \mathbb{E}_{p_k(x)}\left[ \sum_{i=1}^{k-1} \frac{\hr_i(x) \exp(\hr_i(x)/\alpha)}{\sum_{j=1}^{k-1} \exp(\hr_j(x)/\alpha)} - \alpha \log \left(\sum_{i=1}^{k-1} \exp(\hr_i(x) / \alpha)\right)\right] - \sum_{i=1}^{k-1} \mathbb{E}_{p_i(x)} \left[ \frac{\exp(\hr_i(x)/\alpha)}{\sum_{j=1}^{k-1} \exp(\hr_j(x)/\alpha)} \right]\nonumber
\end{align}

We can similarly derive loss functions induced by other convex functions such as the quadratic function $f(\hr_1, \ldots, \hr_{k-1}) = \hat{\vr}^\top \mH \hat{\vr} + \vq^\top \hat{\vr}$, for some positive definite matrix $\mH \succ 0$.

\subsection{More Experimental Details}\label{app:experiment-details}
We provide more details about the problem setup of each task used in our empirical study.

For the synthetic data experiments, we use $k=5$ multivariate Gaussian distributions with identity covariance matrix and different mean vectors:
\begin{align*}
    \vmu_1 &= (1, 0, 0, \ldots)^d \\
    \vmu_2 &= (-1, 0, 0, \ldots)^d \\
    \vmu_3 &= (0, 1, 0, \ldots)^d \\
    \vmu_4 &= (0, -1, 0, \ldots)^d \\
    \vmu_5 &= (1, 0, 0, \ldots)^d
\end{align*}
We use such design so that the density ratios are almost surely well-defined and the numerical optimization with respect to the canonical density ratio vector $\hat{\vr} = (\hat{r}_1, \ldots, \hat{r}_{k-1})$ is more stable. We use a two-layer Multi-Layer Perceptron (MLP) ($\text{Linear}(d, 32) \to \text{Linear}(32, 32) \to \text{Linear}(32, k-1)$) with ReLU activation function to realize the density ratio model.

For CIFAR-10 OOD detection experiments, we set $k=4$ and we construct each distribution as: $p_1$ - samples labeled \{airplane, automobile, bird\}; $p_2$ - samples labeled \{cat, deer, dog, frog\}; $p_3$ - samples labeled \{horse, ship, truck\} and $p_4$ - a uniform mixture of these distributions. We use a standard convolution neural network in the PyTorch tutorial\footnote{ \url{https://pytorch.org/tutorials/beginner/blitz/cifar10_tutorial.html}} with $k-1$ outputs to realize the density ratio model.

For MNIST multi-target generation experiments, we use $k=6$ and we construct each distribution as: $p_1$ - samples labeled \{0,1\};
$p_2$ - samples labeled \{2,3\};
$p_3$ - samples labeled \{4,5\};
$p_4$ - samples labeled \{6,7\};
$p_5$ - samples labeled \{8,9\};
$p_6$ - a mixture of these distributions. We use a two-layer convolutional neural network (Conv(1,32, 3, 1) $\to$ Conv(32, 64, 3, 1) $\to$ Linear(9216, 128) $\to$ Linear(128, $k-1$)) with ReLU activation function to realize the density ratio model.

For multi-distribution off-policy policy evaluation experiments, we conducted experiments on the Half-Cheetah environment in OpenAI Gym \citep{brockman2016openai}. We use soft actor-critic algorithm \citep{haarnoja2018soft} to obtain five different policies with average expected return of \{3811, 5277, 6444, 7397, 5728\} respectively and we learn density ratios between their induced occupancy measures (state-action distributions). We use a three-layer MLP ($\text{Linear}(17, 256) \to \text{Linear}(256, 256) \to \text{Linear}(256, 256) \to \text{Linear}(256, k-1)$) with ReLU activation function to realize the density ratio model.

\end{document}